\newtheorem{definition}{Definition}
\newtheorem{proposition}{Proposition}
\newtheorem{theorem}{Theorem}
\newtheorem{lemma}{Lemma}
\newtheorem{remark}{Remark}
\newtheorem{assumption}{Assumption}
\newtheorem{corollary}{Corollary}
\newcommand{\PLH}{{\mkern-2mu\times\mkern-2mu}}
\title{Taming Score-Based Diffusion Priors for Infinite-Dimensional Nonlinear Inverse Problems}
\author{%
  Lorenzo Baldassari \\
  University of Basel \\
  \texttt{\texttt{ lorenzo.baldassari@unibas.ch}} \\
  \And
  Ali Siahkoohi \\
  Rice University \\
  \texttt{alisk@rice.edu} \\
  \And
  Josselin Garnier\\
  Ecole Polytechnique, IP Paris\\
  \texttt{ josselin.garnier@polytechnique.edu}\\
  \And
  Knut S\o{}lna\\
  University of California Irvine\\
  \texttt{ksolna@math.uci.edu}\\
  \And
  Maarten V. de Hoop\\
  Rice University\\
  \texttt{mvd2@rice.edu}\\
}
\begin{document}

\maketitle

\begin{abstract}
  This work introduces a sampling method capable of solving Bayesian inverse problems in function space. It does not assume the log-concavity of the likelihood, meaning that it is compatible with nonlinear inverse problems. The method leverages the recently defined infinite-dimensional score-based diffusion models as a learning-based prior, while enabling provable posterior sampling through a Langevin-type MCMC algorithm defined on function spaces. A novel convergence analysis is conducted, inspired by the fixed-point methods established for traditional regularization-by-denoising algorithms and compatible with weighted annealing. The obtained convergence bound explicitly depends on the approximation error of the score; a well-approximated score is essential to obtain a well-approximated posterior. 
  Stylized and PDE-based examples are provided, demonstrating the validity of our convergence analysis. We conclude by presenting a discussion of the method's  challenges related to learning the score and computational complexity.





\end{abstract}

\section{Introduction}

Solving inverse problems is a central challenge in many applications. The objective is to estimate unknown parameters using noisy observations or measurements \cite{tarantola2005inverse}. One of the main challenges of inverse problems is that they are often ill-posed: they may have no solutions, two or more solutions, or the solution may be unstable with respect to small errors in the observations \cite{hadamard2003lectures}. However, by framing an inverse problem within a probabilistic framework known as Bayesian inference, one can characterize all possible solutions \cite{lehtinen1989linear, tarantola2005inverse, stuart2014uncertainty}. In the Bayesian approach, we first define a prior probability distribution that describes our knowledge on the unknown before any measurements are taken, along with a model for the observational noise. The objective is to estimate the \emph{posterior distribution}, which characterizes the distribution of the unknown given noisy measurements. One can then sample from the posterior to extract statistical information for uncertainty quantification \cite{stuart2010inverse, knapik2011bayesian, dashti2011uncertainty, stuart2014uncertainty}. 

Generative machine learning models offer a powerful way to compute the posterior. In this context, diffusion-based methods \cite{sohl2015deep, ho2020denoising} have become the preferred choice due to their ability to generate high-quality data in a variety of tasks \cite{dhariwal2021diffusion, saharia2022photorealistic, ho2022video, yang2023diffusion, xu2022geodiff, yang2023diffusionsurvey}. These methods typically perturb finite-dimensional data with a sequence of noise and then learn to reverse this corruption \cite{ho2020denoising}. A continuous-time generalization of these methods, enabling new sampling procedures and further extending their capabilities, has been subsequently proposed by \citet{song2020score}, who introduced a unified framework based on stochastic differential equations (SDEs), often referred to as the \emph{score-based diffusion model} (SDM). Instead of perturbing data with a finite number of noise distributions at discrete times, \citet{song2020score} have considered a continuum of distributions that evolve in time according to a diffusion process whose dynamics is described by an SDE. Crucially, the reverse process is also a diffusion  \cite{anderson1982reverse} satisfying a reverse-time SDE whose drift depends on the logarithmic gradient of the noised data density---the \emph{score function}---which can be estimated through a neural network by leveraging the connection
between diffusion models and score matching \cite{vincent2011connection, song2019generative, song2020improved, song2020score}. 

After their introduction, SDMs have been widely utilized for solving inverse problems in a Bayesian fashion \cite{song2021solving, feng2023score, song2020score, chung2022diffusion, song2022pseudoinverse, feng2023efficient, batzolis2021conditional, jalal2021robust, xu2024provably, dhariwal2021diffusion, kawar2021snips}. 
Some have proposed to sample from the posterior using SDMs conditioned on observations \cite{batzolis2021conditional, song2020score, dhariwal2021diffusion, kawar2021snips, jalal2021robust}. Others, however, have suggested utilizing the learned score of the prior distribution, that is, the so-called unconditional score model \cite{song2021solving, feng2023efficient, feng2023score, sun2023provable, xu2024provably}. Currently, there are two ways the learned score of the prior is employed to sample the posterior distribution of the inverse problem: (i) modifying the unconditional reverse diffusion process of the pretrained score-based model, which initially produces samples from the prior distribution, by conditioning on the observed data so that the modified reverse diffusion process yields samples from the posterior distribution \cite{song2021solving}; and (ii) using existing Markov Chain Monte Carlo (MCMC) samplers where the learned score replaces the score of the prior distribution \cite{feng2023score, sun2023provable, xu2024provably}. 

Regardless of whether they utilize the conditional or unconditional score, all the works cited above have something in common: they assume that the posterior is \emph{supported on a finite-dimensional space}. However, in many inverse problems, especially those governed by partial differential equations (PDEs), the unknown parameters to be estimated are \emph{functions} that exist in a suitable function space, typically an infinite-dimensional Hilbert space \cite{uhlmann2016inverse, dynin1978inversion, stuart2010inverse}. Unfortunately, discretizing the input and output functions into finite-dimensional vectors and utilizing SDMs to sample from the posterior is not always desirable, as theoretical results of current diffusion models 
do not generalize well to large-scale inverse problems \cite{block2020generative, chen2022sampling, sun2023provable, xu2024provably}.
In the last year, however, some progress has been made to address these concerns. Building upon the theory of infinite-dimensional stochastic analysis \cite{follmer1986time, millet1989time, da2006introduction, da2014stochastic}, SDMs have been extended to operate directly in function spaces \cite{kerrigan2022diffusion, lim2023score, franzese2024continuous, pidstrigach2023infinite, hagemann2023multilevel, bond2023infty, lim2024score}. Some works have started employing infinite-dimensional SDMs to solve inverse problems, providing a discretization-invariant numerical platform for exploring the posterior \cite{baldassari2024conditional, pidstrigach2023infinite, hosseini2023conditional}. However, the theoretical guarantees provided by these works require the inverse problem to be linear, whereas many interesting inverse problems are \emph{nonlinear}, like those arising in electrical impedance tomography \cite{calderon2006inverse, borcea2002electrical, uhlmann2009electrical}, data assimilation \cite{law2015data}, photo-acoustic tomography \cite{bal2011multi, bal2010inverse}, and boundary rigidity \cite{kachalov2001inverse}. 

In this work, we take a first step towards bridging this gap and provide a method incorporating SDMs that provides \emph{theoretical guarantees for sampling the posterior of nonlinear inverse problems in function spaces}. In doing so, we follow the approach of \cite{feng2023score, sun2023provable, xu2024provably}, who employ the learned score as a prior within an MCMC scheme. Specifically, we extend the theoretical setup of \cite{sun2023provable} to function spaces. Our method utilizes the infinite-dimensional unconditional SDM as a powerful learning-based prior, 
while enabling provable posterior sampling through a Langevin-type MCMC algorithm \cite{roberts1996exponential} for functions. Most importantly, we present a convergence analysis with accompanying error bounds that are \emph{dimension-free} and compatible with weighted annealing, a heuristic often used to mitigate issues like slow convergence and mode collapse in Langevin algorithms \cite{kirkpatrick1983optimization, neal2001annealed}. The main feature of our convergence analysis is that it \emph{does not require the log-concavity of the likelihood}, meaning that it is compatible with nonlinear inverse problems. We conclude this work by presenting stylized and PDE-based examples demonstrating the validity of our convergence analysis, while also providing additional insights into the method's challenges, which are common within the SDM literature \cite{chen2022sampling}.

\paragraph{Related Work} Since the method we propose employs the learned score as a prior in a Langevin-type MCMC posterior sampler, our work combines elements from three contemporary research areas: MCMC methods for functions, Bayesian nonlinear inverse problems, and score-based diffusion.

There exists a large body of literature on infinite-dimensional MCMC algorithms \cite{beskos2017geometric, wallin2018infinite, durmus2019high, durmus2017nonasymptotic, dalalyan2017theoretical, hairer2014spectral, cotter2013mcmc, cui2016dimension, cui2024multilevel, beskos2018multilevel, morzfeld2019localization, muzellec2022dimension, beskos2008mcmc}. However, the main inspiration behind our work is the non-asymptotic stationary convergence analysis recently developed in the finite-dimensional setting by \citet{sun2023provable} for a method known by various names, such as the plug-and-play unadjusted Langevin algorithm (PnP-ULA) or plug-and-play Monte Carlo (PMC-RED), with the latter making the connection to regularization-by-denoising algorithms \cite{reehorst2018regularization,romano2017little} explicit. This method employs a plug-and-play approach within an iterative Monte Carlo scheme; specifically, it aims to learn an approximation of the prior density through a denoising algorithm while keeping an explicit likelihood, in the same spirit as fixed-point algorithms \cite{buzzard2018plug}. While similar methods have been used frequently in the past \cite{venkatakrishnan2013plug, alain2014regularized, guo2019agem, kadkhodaie2021stochastic}, a rigorous proof of convergence in the context of stochastic Bayesian algorithms was only recently proposed by \citet{laumont2022bayesian}. However, their proof assume strong conditions on the forward model of the inverse problem, whereas \citet{sun2023provable} rely on weaker conditions, meaning that their analysis is compatible with nonlinear inverse problems and weighted annealing. Unfortunately, the convergence bound in \citet{sun2023provable} is not dimension-free and becomes uninformative in the limit of infinite dimension. In our work, we fill this gap by \emph{carrying out the whole convergence analysis directly in function spaces}.

Defining a method that provably samples from a non-log-concave posterior distribution, especially in function spaces, is a well-known challenge since it results in a high-dimensional, non-convex problem. Recently, a series of rigorous mathematical papers, mostly by Richard Nickl and his collaborators, have approached nonlinear inverse problems within a probabilistic framework \cite{nickl2022polynomial, nickl2019bernstein, nickl2020bernstein, abraham2019nonparametric, furuya2024consistency, giordano2020consistency, bohr2021log, paternain2012attenuated, monard2021consistent, bonito2017diffusion, nickl2021some, vershynin2018high, nickl2020convergence, nickl2017nonparametric, monard2021statistical, spokoiny2019bayesian}; see \cite{nickl2023bayesian} for an overview. The general idea is to provide a set of assumptions for the forward model to mitigate the non-log-concavity of the posterior.
The main concerns of these works are ensuring \emph{statistical consistency}, i.e., that the posterior concentrates most of its mass around the actual parameter that generated the data, and \emph{computability}. For the former, the global stability of the inverse problem appears to be a sufficient condition. While we have not addressed this in our work, it can be imposed by restricting the family of nonlinear inverse problems under consideration, thus without changing the essence of our convergence analysis. A stronger assumption---local gradient stability of the forward map---is crucial for computability, as it ensures local log-concavity of the posterior. This implies that if the Langevin MCMC algorithm is initialized in such a local region, proving convergence and fast mixing time of the sampling procedure becomes easier.
We discuss the challenges related to the computational complexity of our method in the Discussion and Conclusion section; it's worth mentioning, however, that in our work we focus only on convergence,
even though our theoretical setup, being compatible with weighted annealing, provides a heuristic to speed up the mixing of the Markov chain. Our analysis then can be seen as a first step toward relaxing the strong assumptions on the forward model of the aforementioned works.
 
In our convergence analysis, the learned score plays a key role. Among the theoretical frameworks proposed to define SDMs in infinite dimensions, we consider the one by \citet{pidstrigach2023infinite} and \citet{baldassari2024conditional}. Finally in our work we show that the obtained convergence bound explicitly depends on the $H$-accuracy of the approximated score, where $H$ is the infinite-dimensional separable Hilbert space to which the unknown parameter belongs. We prove convergence even for an \emph{imperfect score}, though a well-approximated score is necessary to obtain a well-approximated posterior. This assumption improves upon those commonly used for sampling non-log-concave distributions, which often rely on stronger $L^\infty$-accuracy \cite{de2021diffusion}, aligning our analysis with that of \citet{chen2022sampling}.

\paragraph{Our Contribution} In this work, we provide \emph{theoretical guarantees for a method incorporating score-based diffusion models that samples from the posterior of nonlinear inverse problems in function spaces}. Specifically, the main contributions of this work are as follows:
\begin{itemize}
\item We define a method that utilizes infinite-dimensional SDMs as a learning-based prior within a Langevin-type MCMC scheme for functions (Section \ref{sec:method}).
\item We extend the non-asymptotic stationary convergence analysis of \citet{sun2023provable} to infinite dimensions. We prove that our algorithm  converges to the posterior under possibly non-log-concave likelihoods and imperfect prior scores. The obtained convergence bound is dimension-free and depends explicitly on the score mismatch and the network approximation errors  (subsection \ref{sec:main}).
\item We apply our method to two examples. In the first, we illustrate the accuracy of our method by applying it to a simple benchmark distribution. In the second example, we sample from the posterior of a complex nonlinear PDE-based inverse problem, where the forward operator is the acoustic wave equation (Section \ref{sec:num}).
\item We discuss the method's challenges related to learning the score and computational complexity in the  Discussion and Conclusion section. 
\end{itemize}

\section{Background}
\subsection{The Bayesian Approach to Inverse Problems}
We consider the possibly nonlinear inverse problem
\begin{equation}
\textbf{y} = \mathcal{A}(X_0) + \textbf{b},
\label{def:IP}
\end{equation}
where the unknown parameter $X_0$ is modeled as an $H$-valued random variable and $H$ is an infinite-dimensional separable Hilbert
space, $\mathcal{A} : H \to \mathbb{R}^N$ is the measurement operator, and $\textbf{b}$ is the noise term with a given density $\rho$ with respect to the Lebesgue measure over $\mathbb{R}^N$. We assume to have some prior knowledge about the distribution of $X_0$ before any measurements are taken. This knowledge is encoded in a given prior measure $\mu_0$. The solution to \eqref{def:IP} is then represented by the conditional probability measure of $X_0|\textbf{y} \sim \mu^\textbf{y}$, which is typically referred to as the posterior \cite{stuart2010inverse}. 
If $\mathbb{E}_{\mu_0}[ \rho(\textbf{y}-\mathcal{A}(X_0))] <+\infty$, which is the case for instance when the density $\rho$ is bounded (such as a multivariate Gaussian $\mathcal{N}(0,\mathbf{\Gamma}$)), then $\mu^\textbf{y}$ is absolutely continuous with respect to $\mu_0$ (we write $\mu^\textbf{y}\ll \mu_0$) and its Radon-Nikodym derivative is given by \begin{equation}
\frac{d \mu^\textbf{y}}{d\mu_0}(X) = \frac{1}{Z(\textbf{y})} \text{exp}({-\Phi_0(X;\textbf{y})}),
\label{def:posterior}
\end{equation}
where $\Phi_0(X; \textbf{y}) := - \log\big(
\rho(\textbf{y}-\mathcal{A}(X))\big)$ is the negative log-likelihood.
Explicitly characterizing $\mu^\textbf{y}$ in \eqref{def:posterior} is challenging, particularly in high dimensions, due to the intractable normalizing constant $Z(\textbf{y}):= \int_H \text{exp}({-\Phi_0(X;\textbf{y})})  d\mu_0(X)$. Popular methods for exploring the posterior, such as Langevin-type MCMC algorithms, aim to generate samples distributed approximately according to $\mu^\textbf{y}$. As anticipated in the Introduction section, we propose to extend one such algorithms to infinite dimensions: the plug-and-play Monte Carlo method (PMC-RED) proposed by \citet{sun2023provable}. 

\subsection{Formulation of PMC-RED}
For the reader's convenience, we will now review the formulation of PMC-RED proposed by \citet{sun2023provable} for sampling the posterior of a possibly nonlinear imaging inverse problem in finite dimensions, $\textbf{y} = \textbf{A}(\textbf{x})+\textbf{e}$, with $\textbf{A}:\mathbb{R}^n \to \mathbb{R}^m$ and $\textbf{e} \sim \mathcal{N}(0,\sigma^2\textbf{I})$. 

PMC-RED is built on the fusion of traditional regularization-by-denoising (RED) algorithms \cite{reehorst2018regularization,romano2017little} and score-based generative modelling \cite{ho2020denoising, song2019generative, song2020score}. It incorporates expressive score-based generative priors in a plug-and-play fashion \cite{venkatakrishnan2013plug, alain2014regularized, guo2019agem, kadkhodaie2021stochastic} for conducting provable posterior sampling. Given an initial state $\textbf{x}_0 \in \mathbb{R}^n$, PCM-RED is defined as the following recursion
\begin{equation}
\textbf{x}_{k+1} = \textbf{x}_{k} - \gamma \big( \nabla g(\textbf{x}_k) - \textbf{S}_\theta(\textbf{x}_k,\sigma)\big) + \sqrt{2\gamma} \textbf{Z}_k, 
\label{eq:PMC-RED-sun}
\end{equation}
where $\textbf{Z}_k = \int_k^{k+1} d \textbf{W}_t$ follows the $m$-dimensional i.i.d normal distribution, $\{\textbf{W}_t\}_{t\geq0}$ represents the $m$-dimensional Brownian motion, $\gamma>0$ denotes the step-size, $g$ is the negative log-likelihood, and $\textbf{S}_\theta(\textbf{x}_k,\sigma) \approx \nabla \log p_{\sigma} (\textbf{x}_k)$ is the score network for $p_\sigma$, a smoothed prior with $\nabla \log p_\sigma \to \nabla \log p $ as $\sigma \to 0$.  
A motivation for using $p_\sigma$ is that $p$ may be non-differentiable, precluding the use of algorithms such as gradient descent for maximum a posteriori (MAP) estimation. This motivates the application of proximal methods \cite{beck2009fast, boyd2011distributed} like RED \cite{beck2009fastIEE}. 
Interestingly, \citet{sun2023provable} notice that since the gradient-flow ODE links RED to the Langevin diffusion described by the SDE
\[
d\textbf{x}_t = \big (\nabla \log p(\textbf{x}_t)- \nabla g(\textbf{x}_t) \big ) dt + \sqrt{2} d \textbf{W}_t,
\]one can interpret \eqref{eq:PMC-RED-sun} as a parallel MCMC algorithm of RED for posterior sampling.
Indeed, PMC-RED aligns with the common Langevin MCMC algorithm \cite{laumont2022bayesian}. However, \citet{sun2023provable} establish a novel convergence analysis that is compatible with non-log-concave likelihoods and weighted annealing.  Unfortunately, the obtained convergence bound depends on the dimension of the problem, and thus becomes uninformative in infinite dimensions. To address this issue, in Section \ref{sec:method} we carry out the convergence analysis directly in function spaces.

\section{Score-Based Diffusion Priors in Infinite Dimensions}\label{sec:score}
In \eqref{eq:PMC-RED-sun}, the score network $\textbf{S}_\theta(\textbf{x},\sigma)$ approximates $\nabla \log p_\sigma(\textbf{x})$. As mentioned above, $p_\sigma$ refers to the smoothed
prior, here being a distribution with respect to the Lebesgue measure. In infinite dimensions, however, there is no natural analogue of the Lebesgue measure; $p_\sigma$ is no longer well defined  \cite{da2006introduction}. To extend PMC-RED to infinite dimensions we then need to define the infinite-dimensional score function that will replace $\nabla \log p_\sigma(\textbf{x})$. Subsequently, we will show in Section \ref{sec:method} that this allows us to approximately sample from $\mu^{\textbf{y}}$ in the infinite-dimensional setting. 

Let $C_{\mu_0}:H\to H$ be a trace class, positive-definite, symmetric covariance operator. Here and throughout the paper, we assume that the prior $\mu_0$ that we want to learn from data to perform Bayesian inference in \eqref{def:IP} is the Gaussian measure 
\[\mu_0=\mathcal{N}(0,C_{\mu_0}),\]
though our analysis can be easily generalized to other classes of priors, e.g. priors given as a density with respect to a Gaussian.
To define the score function in infinite dimensions, we follow the approach outlined in \cite{baldassari2024conditional, pidstrigach2023infinite}. Let $C:H\to H$ be a trace class, positive-definite, symmetric covariance operator. Denote by $X_\tau$ the diffusion at time $\tau$ of a prior sample $X_0 \sim \mu_0$:
\[
X_\tau := e^{-\tau/2} X_0 + \int_0^\tau e^{-(\tau-s)/2} \sqrt{C} dW_s.
\]
$X_\tau$ evolves towards the Gaussian measure $\mathcal{N}(0,C)$ as $\tau \to \infty$ according to the SDE
\begin{equation}
dX_\tau = -\frac{1}{2}X_\tau d\tau + \sqrt{C} dW_\tau, \qquad X_0 \sim \mu_0.
\label{eq:fSDE}
\end{equation}
The score function in infinite dimensions is defined as follows:
\begin{definition}
The score function enabling the time-reversal of \eqref{eq:fSDE} is defined for $x \in H$ as
\begin{equation}
S(\tau,x; \mu_0):= -(1-e^{-\tau})^{-1} (x-e^{-\tau/2}\mathbb{E}[X_0|X_\tau=x]).
\label{def:score}
\end{equation}
\end{definition}
\begin{remark}\label{remark:score-network}
The neural network $S_\theta(\tau,x;\mu_0)$ that approximates the true score minimizes the denoising score matching loss in infinite dimensions:
\[
\mathbb{E}_{x_0 \sim \mathcal{L}(X_0), x_\tau \sim \mathcal{L}(X_\tau | X_0=x_0)} [\| S_\theta(\tau, x_\tau; \mu_0) - (1-e^{-\tau})^{-1} (x_\tau-e^{-\tau/2}x_0)\|_H^2 ],
\]
where $\mathcal{L}(X_0)$ and $\mathcal{L}(X_\tau|X_0=x_0)$ denote the law of $X_0$ and $X_\tau|X_0=x_0$, respectively.
\end{remark}
\begin{remark} 
$S_\theta(\tau,x;\mu_0)$ for some $\tau>0$ is what will replace the approximation of $\nabla \log p_\sigma$ in the finite-dimensional case \eqref{eq:PMC-RED-sun}.
\end{remark}
We will now make an assumption, often employed in infinite dimensions \cite{pidstrigach2023infinite, baldassari2024conditional},  on $C$ and $C_{\mu_0}$. 
\begin{assumption} \label{assumption-C-Cmu}
We assume that  $C_{\mu_0}, C:H\to H$ have the same basis of eigenvectors $(e_j)$ and that $C_{\mu_0} e_j= \mu_{0j} e_j$, $C e_j= \lambda_j e_j$ for every $j$, with $\lambda_j/\mu_{0j}<+\infty$.
\end{assumption}

\begin{remark}\label{remark:C-Cmu}
If $C=C_{\mu_0}$ or if $C$ is very close to $C_{\mu_0}$, in the sense that $C^{-1/2} C_{\mu_0} C^{-1/2}   -I $ is Hilbert-Schmidt \cite{da2014stochastic}, then $\lambda_j/\mu_{0j}<+\infty$ and Assumption \ref{assumption-C-Cmu} is fulfilled. 
\end{remark}

The proposition below has been proved by \citet{baldassari2024conditional}. We will reproduce the
proof for the reader’s convenience in Appendix \ref{sec:A}.

\begin{proposition}\label{prop:score-gaussian} 
Let Assumption \ref{assumption-C-Cmu} hold. Then 
\[
S(\tau,x; \mu_0)= - \sum_j  \left( \frac{e^\tau p_0^{(j)}}{1+(e^\tau-1)p_0^{(j)}}\right)x^{(j)} e_j = - C C_\tau^{-1} x, 
\]
where $x^{(j)}:=\langle x,e_j\rangle$, $p_0^{(j)}:=\frac{\lambda_j}{\mu_{0j}}$, and $C_\tau := e^{-\tau}C_{\mu_0} + (1-e^{-\tau})C$.
\end{proposition}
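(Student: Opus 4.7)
The key observation is that Assumption \ref{assumption-C-Cmu} allows us to diagonalize the entire problem in the common eigenbasis $(e_j)$. The plan is therefore to project the SDE \eqref{eq:fSDE} onto each $e_j$, reducing the computation to independent scalar Ornstein–Uhlenbeck problems, and then use the standard Gaussian conditioning formula mode by mode.

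First I would expand $X_0 = \sum_j X_0^{(j)} e_j$ and $W_t = \sum_j W_t^{(j)} e_j$ (where $W_t^{(j)}$ are i.i.d.\ standard Brownian motions, using that the $e_j$ diagonalize $C$ with eigenvalues $\lambda_j$). Then the $j$-th mode of $X_\tau$ satisfies
\[
X_\tau^{(j)} = e^{-\tau/2} X_0^{(j)} + \sqrt{\lambda_j}\int_0^\tau e^{-(\tau-s)/2}\, dW_s^{(j)},
\]
so $(X_0^{(j)}, X_\tau^{(j)})$ is a centered bivariate Gaussian with
\[
\mathrm{Var}(X_0^{(j)}) = \mu_{0j}, \qquad \mathrm{Var}(X_\tau^{(j)}) = e^{-\tau}\mu_{0j} + (1-e^{-\tau})\lambda_j, \qquad \mathrm{Cov}(X_0^{(j)}, X_\tau^{(j)}) = e^{-\tau/2}\mu_{0j}.
\]
Crucially, the modes are independent (because $(e_j)$ also diagonalizes $C_{\mu_0}$), so the conditional law of $X_0$ given $X_\tau=x$ factorizes over $j$, and for each mode the standard Gaussian regression formula yields
\[
\mathbb{E}[X_0^{(j)} \mid X_\tau^{(j)} = x^{(j)}] = \frac{e^{-\tau/2}\mu_{0j}}{e^{-\tau}\mu_{0j} + (1-e^{-\tau})\lambda_j}\, x^{(j)}.
\]

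Next I would substitute this into Definition \ref{def:score} component-wise. A short algebraic simplification gives
\[
S(\tau,x;\mu_0)^{(j)} = -\frac{1}{1-e^{-\tau}}\left(1 - \frac{e^{-\tau}\mu_{0j}}{e^{-\tau}\mu_{0j}+(1-e^{-\tau})\lambda_j}\right) x^{(j)} = -\frac{\lambda_j}{e^{-\tau}\mu_{0j}+(1-e^{-\tau})\lambda_j}\, x^{(j)}.
\]
Multiplying numerator and denominator by $e^\tau/\mu_{0j}$ (which is legitimate because $p_0^{(j)} = \lambda_j/\mu_{0j}$ is finite by Assumption \ref{assumption-C-Cmu}, ensuring $\mu_{0j}>0$) yields the first claimed identity. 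The second equality follows immediately, since under Assumption \ref{assumption-C-Cmu} the operator $C_\tau = e^{-\tau}C_{\mu_0} + (1-e^{-\tau})C$ is diagonal in the same basis with eigenvalues $e^{-\tau}\mu_{0j}+(1-e^{-\tau})\lambda_j$, so $CC_\tau^{-1}$ has eigenvalues $\lambda_j/(e^{-\tau}\mu_{0j}+(1-e^{-\tau})\lambda_j)$.

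There is no real obstacle here beyond bookkeeping: the only subtle point is justifying the mode-by-mode computation in the infinite-dimensional setting, which requires that the $(e_j)$ simultaneously diagonalize both $C_{\mu_0}$ and $C$ (so the Gaussian measures factorize into independent one-dimensional marginals) and that the sum $\sum_j S(\tau,x;\mu_0)^{(j)} e_j$ converges in $H$. The latter is guaranteed because the eigenvalues of $CC_\tau^{-1}$ are uniformly bounded in $j$: by Assumption \ref{assumption-C-Cmu}, $\lambda_j/(e^{-\tau}\mu_{0j}+(1-e^{-\tau})\lambda_j) \le (1-e^{-\tau})^{-1}$, so $CC_\tau^{-1}$ extends to a bounded operator on $H$ and the expression $-CC_\tau^{-1}x$ makes sense for every $x\in H$.
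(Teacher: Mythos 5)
Your proof is correct and takes essentially the same route as the paper: diagonalize the SDE in the common eigenbasis, reduce to independent scalar Ornstein--Uhlenbeck modes, and apply Gaussian conditioning mode by mode. The only superficial difference is that you invoke the explicit covariance-over-variance regression formula while the paper derives the same coefficient from the orthogonality condition $\mathbb{E}[(aX_\tau^{(j)}-X_0^{(j)})X_\tau^{(j)}]=0$; your closing remark on the uniform boundedness of $CC_\tau^{-1}$ is a welcome (if minor) addition the paper leaves implicit.
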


In Assumption \ref{assumption-C-Cmu}, $\lambda_j/\mu_{0j}<+\infty$ is needed to ensure the following result, which will be of help later.

\begin{corollary}\label{prop:score} By Proposition \ref{prop:score-gaussian}, 
\[
\begin{aligned}
C C_{\mu_0}^{-1}x+S(\tau,x; \mu_0) = (e^\tau-1) \sum_j^\infty \left(\frac{p_0^{(j)}- 1 }{1+(e^\tau-1)p_0^{(j)}}\right) p_0^{(j)} x^{(j)} e_j,
\end{aligned}
\]
meaning that the score mismatch error $\
\left\|S(\tau,x) + C C_{\mu_0}^{-1}x  \right\|^2_H$ goes to zero as $\tau$ as $\tau \to 0$.
\end{corollary}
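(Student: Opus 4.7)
The claim has two parts: the algebraic identity, and the limit statement. I would handle them in order.

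For the identity, the plan is pure computation in the eigenbasis $(e_j)$. From Assumption 1, $C_{\mu_0}^{-1} e_j = \mu_{0j}^{-1} e_j$ and $C e_j = \lambda_j e_j$, so expanding $x = \sum_j x^{(j)} e_j$ gives
\[
C C_{\mu_0}^{-1} x = \sum_j p_0^{(j)} x^{(j)} e_j.
\]
Adding this to the expression for $S(\tau,x;\mu_0)$ supplied by Proposition \ref{prop:score-gaussian}, the $j$-th coefficient is
\[
p_0^{(j)} - \frac{e^\tau p_0^{(j)}}{1+(e^\tau-1)p_0^{(j)}} = \frac{p_0^{(j)}\bigl[1+(e^\tau-1)p_0^{(j)}\bigr] - e^\tau p_0^{(j)}}{1+(e^\tau-1)p_0^{(j)}}.
\]
Clearing the numerator, the $p_0^{(j)}$ and $-e^\tau p_0^{(j)}$ terms combine into $-(e^\tau-1)p_0^{(j)}$, which factors with $(e^\tau-1)(p_0^{(j)})^2$ to give $(e^\tau-1) p_0^{(j)} (p_0^{(j)}-1)$. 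This is exactly the stated formula.

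For the limit, I would bound the squared $H$-norm componentwise. Since $C$ and $C_{\mu_0}$ are positive-definite, $p_0^{(j)} > 0$, so for $\tau \ge 0$ we have $1+(e^\tau-1)p_0^{(j)} \ge 1$ and therefore
\[
\bigl\| C C_{\mu_0}^{-1} x + S(\tau,x;\mu_0) \bigr\|_H^2 = (e^\tau-1)^2 \sum_j \left(\frac{p_0^{(j)}(p_0^{(j)}-1)}{1+(e^\tau-1)p_0^{(j)}}\right)^2 (x^{(j)})^2 \le (e^\tau-1)^2 \sum_j \bigl(p_0^{(j)}(p_0^{(j)}-1)\bigr)^2 (x^{(j)})^2.
\]
The prefactor $(e^\tau-1)^2$ tends to $0$ as $\tau \to 0$, so it remains to argue the sum on the right is finite. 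This is where Assumption \ref{assumption-C-Cmu} enters essentially: the hypothesis $\lambda_j/\mu_{0j} < +\infty$ (read as uniform boundedness of $p_0^{(j)}$, as the subsequent Remark \ref{remark:C-Cmu} illustrates via the Hilbert–Schmidt condition) gives $\sup_j p_0^{(j)} \le M$ for some $M$, so $\sum_j (p_0^{(j)}(p_0^{(j)}-1))^2 (x^{(j)})^2 \le M^2(M+1)^2 \|x\|_H^2 < \infty$. The conclusion then follows.

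The only real delicacy is this last uniform-boundedness point: the statement $\lambda_j/\mu_{0j} < \infty$ must be understood uniformly in $j$ for the dominating series to converge for arbitrary $x \in H$; otherwise one can still recover the limit by dominated convergence when $x$ lies in a denser subspace. I would flag this explicitly and cite Remark \ref{remark:C-Cmu} to justify reading the assumption uniformly. All remaining steps are routine algebraic manipulation and a one-line dominated-convergence bound.
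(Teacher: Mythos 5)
Your proposal is correct and follows the only reasonable route: expand both terms in the shared eigenbasis, clear the common denominator to get the stated coefficient $(e^\tau-1)p_0^{(j)}(p_0^{(j)}-1)/(1+(e^\tau-1)p_0^{(j)})$, and then dominate the squared $H$-norm by $(e^\tau-1)^2$ times a convergent series. The paper offers no separate proof of this corollary --- it is treated as an immediate consequence of Proposition~\ref{prop:score-gaussian} --- so your computation is exactly the implicit argument. Your flag about the reading of $\lambda_j/\mu_{0j}<+\infty$ is well placed: for $C C_{\mu_0}^{-1}x$ to lie in $H$ for arbitrary $x\in H$ (and hence for the limit statement to be meaningful in norm) one does need $\sup_j p_0^{(j)}<\infty$, which is what Remark~\ref{remark:C-Cmu} supplies via the Hilbert--Schmidt condition (which in fact gives $p_0^{(j)}\to 1$, even stronger than boundedness). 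Nothing is missing.
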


\begin{remark}\label{remark:other-priors}
Similar results can be derived when $\mu_0$ is given as a density with respect to a Gaussian, $d\mu_0(x) = \textup{exp}(\Psi(x)) d\mathcal{N}(0,C_{\mu_0})$; see \cite[proof of Theorem 3]{pidstrigach2023infinite}.
\end{remark}

\section{Our Method: $\infty$-PMC-RED}\label{sec:method}
Inspired by \eqref{eq:PMC-RED-sun}, we utilize the score network $S_\theta (\tau, x; \mu_0)$ introduced in Remark \ref{remark:score-network} as a learning-based prior to define a new algorithm, called $\infty$-PMC-RED, that operates directly in function spaces. It is given by
\begin{equation}
X_{k+1} = X_k - \gamma \big(-C^{\alpha-1} S_\theta(\tau,X_k; \mu_0) -C^\alpha \nabla_{X_k} \log(\rho(\textbf{y}-\mathcal{A}(X_k))) \big) + (2 \gamma)^{\frac{1}{2}} Z_k,
\label{def:PMC-RED-infty}
\end{equation}
where $\gamma>0$ denotes the step-size, $\alpha>0$ is a constant that will be chosen later, and $Z_k = \int_{k}^{k+1} C^{\frac{\alpha}{2}} \text{d}W_t$ denotes the i.i.d Gaussian variables with mean zero and covariance operator $C^\alpha$.

Our main theoretical result, as summarized later in Theorem \ref{thm:1}, presents a convergence analysis of \eqref{def:PMC-RED-infty}, demonstrating that when $\tau$ is sufficiently small and $S_\theta$ provides a good approximation of the true score of the prior, it generates samples distributed approximately according to the true posterior. In subsection \ref{sec:main}, we provide a detailed analysis of this novel algorithm. As previously anticipated in the paper, $\infty$-PMC-RED is compatible with nonlinear inverse problems.

\subsection{Measure-Theoretic Definitions of the KL Divergence and the Fisher Information}

Before introducing our convergence theorem, we need to introduce analogues of the metrics appearing in \cite[Theorem 1]{sun2023provable}---namely, the Kullback-Leibler (KL) divergence and the relative Fisher information (FI)---that are compatible with the infinite-dimensional setting of our paper. 
Since, as we mentioned, there is no natural analogue of the Lebesgue measure in infinite-dimensional spaces, we will adopt a measure-theoretic definition of the KL divergence, as proposed by  \citet{ambrosio2005gradient}:
\[
\text{KL}(\nu||\mu)  := \int_H \log \frac{d\nu}{d\mu}(X) d\nu (X) 
\]
if $\nu \ll \mu$, where $d\nu/ d\mu$ refers to the Radon-Nikodym derivative; this quantity is set to infinity if $\nu$ is not absolutely continuous with respect to $\mu$. In our convergence theorem, we will employ 
\begin{equation}
\int_H \left\| C^{\frac{\alpha}{2}}\nabla_{X} \log \frac{d\nu}{d\mu}(X) \right\|_H^2 d\nu (X)
\label{def:distance}
\end{equation}
as a criterion to assess similarity between measures. As for the measure-theoretic KL divergence, we set \eqref{def:distance} to infinity if $\nu$ is not absolutely continuous with respect to $\mu$. It is straightforward to see that, if \eqref{def:distance} is zero,
then $\nu$ and $\mu$ are equal $\nu$-almost surely. 


\subsection{Theoretical Convergence Analysis}\label{sec:main} 
In this subsection, we aim to study the convergence of
\begin{equation}
\int_H \left\| C^{\frac{\alpha}{2}}\nabla_{X} \log \frac{d\nu_t}{d\mu^\textbf{y}}(X) \right\|_H^2 d\nu_t (X),
\label{eq:convergence-measure}
\end{equation}
where $\{\nu_t\}_{t\geq0}$ represents a continuous interpolation of the probability measures generated by \eqref{def:PMC-RED-infty}
\begin{equation}
\begin{aligned}
& X_{t} \! =\! X_{k\gamma} \! \! + \! (t-k\gamma) \! \left( C^{\alpha-1} S_\theta(\tau,X_{k\gamma}; \mu_0) \! + \! C^\alpha  \nabla_{X_{k\gamma}} \! \log (\rho(\textbf{y}-\mathcal{A}(X_{k\gamma})))\right)  \! +\! 2^\frac{1}{2}C^{\frac{\alpha}{2}} (W_t-W_{k\gamma})
\end{aligned}
\label{def:nu_t}  
\end{equation}
for $t \in [k\gamma, (k + 1)\gamma]$, with initial state $X_0 \sim \nu_0$, where $\gamma>0$ is the step-size, $S_\theta$ is a neural network approximating the score defined in \eqref{def:score}, and $\{W_t\}_{t\geq 0}$ is a Wiener process on $H$.

To prove the convergence of \eqref{eq:convergence-measure}, we will use the following assumptions. 

\begin{assumption}\label{assumption:phi} 
$\nabla_X \Phi_0 \in C^1$ is continuously differentiable and globally Lipschitz; for any $X_1,X_2 \in H$: \[\| \nabla \Phi_0 (X_1)- \nabla \Phi_0(X_2)\|_H 
\leq  L_{\Phi_0} \| X_1-X_2\|_H.\]
\end{assumption}

\begin{remark}\label{remark:phi}
Note that Assumption \ref{assumption:phi} does not assume the log-concavity of the likelihood, meaning that our analysis is compatible with nonlinear inverse problems. 
\end{remark}

\begin{assumption}\label{assumption:score}
For any $\tau>0$, the score network $S_\theta(\tau,X; \mu_0)$ approximating \eqref{def:score} is Lipschitz continuous with $L_\tau>0$ for any $X_1,X_2 \in H$:
\[\begin{aligned}
\|S_\theta(\tau,X_1; \mu_0) - S_\theta(\tau,X_2; \mu_0) \|_H & \leq L_\tau \|
X_1-X_2
\|_H
.
\end{aligned}
\]
Moreover, $S_\theta(\tau,X; \mu_0)$ has a bounded error $\epsilon_\tau<\infty$ for every $X \in H$:
\[
\|S_\theta(\tau,X; \mu_0) - S(\tau,X; \mu_0)\|_H \leq \epsilon_\tau.
\]
\end{assumption}

\begin{remark}\label{remark:lioschitz}
Note that the true score function \eqref{def:score} is Lipschitz continuous, as shown in Proposition \ref{prop:score-gaussian}. \end{remark}
\begin{remark}\label{remark:truncation}The assumption regarding the approximation error $\epsilon_\tau$ requires some consideration at $\tau=0$. This is because the score matching loss is unstable there; see \cite{kim2022soft}. In practice, it means that there is a trade-off between the score mismatch error, which goes to zero as $\tau \to 0$ (see Corollary \ref{prop:score}), and the approximation error term in Theorem \ref{thm:1}. 
\end{remark}


\begin{assumption}\label{assumption:factorisation}
The forward operator $\mathcal{A}$ depends only on $P^{D_0}(X)$ for some $D_0>0$. Moreover, we assume that the $\nu_0$ introduced in \eqref{def:nu_t} can be factorised as follows
\[
\nu_0(X) = \nu_0^{D_0}(X^{D_0})\prod_{j=D_0+1}^\infty \nu_0^{(j)}(X^{(j)}),
\]
where the superscript $D_0$ in $X^{D_0}$ refers to the orthogonal projection $P^{D_0}$ of the $H$-valued random variable $X$ onto the linear span of the first $D_0$ eigenvectors $(e_j)$ of $C$, $\nu_0^{D_0}:=P^{D_0}_{\#} \nu_0$, and $\nu_0^{(j)}$ is the density of $X^{(j)}: = \langle X,e_j\rangle$; see Appendix \ref{appendix:notation} for details on the notation. We also assume that
\[
\mu^{\textbf{y}}(X) = (\mu^{\textbf{y}})^{D_0}(X^{D_0}) \prod_{j=D_0+1}^\infty (\mu^{\textbf{y}})^{(j)}(X^{(j)}).
\]
\end{assumption}

\begin{remark}\label{remark:factorisation}
Assumption \ref{assumption:factorisation} implies that the algorithm does not explicitly depend on the articulation of the subspace associated with the first $D_0$ modes. Thus, the essential aspect of the assumption is that only a finite number of modes contributes to the observations, which is quite realistic from an applications point of view. Moreover, the error bound in Theorem \ref{thm:1} will not depend on $D_0$, ensuring the robustness of the convergence analysis of $\infty$-PMC-RED with respect to increasing $D_0$, which is crucial in an infinite-dimensional setting.      
\end{remark}

Now that we have listed the main assumptions for our convergence analysis, we are ready to state our main result, Theorem \ref{thm:1}. Specifically, we establish an explicit bound for \eqref{eq:convergence-measure}, which resembles that for PMC-RED in \cite[Theorem 1]{sun2023provable}, with the main difference being that ours does not diverge as the dimension of the problem goes to infinity.

\begin{theorem}[Convergence bound of $\infty$-PMC-RED] \label{thm:1}
Let Assumptions \ref{assumption-C-Cmu}--\ref{assumption:factorisation} hold. Denote by $\{\nu_t\}_{t\geq0}$ a continuous interpolation of the probability measures generated by \eqref{def:PMC-RED-infty}:
\[
\begin{aligned}
& X_{t} \! =\! X_{k\gamma} \! \! + \! (t-k\gamma) \! \left( C^{\alpha-1} S_\theta(\tau,X_{k\gamma}; \mu_0) \! + \! C^\alpha  \nabla_{X_{k\gamma}} \! \log (\rho(\textbf{y}-\mathcal{A}(X_{k\gamma})))\right)  \! +\! 2^\frac{1}{2}C^{\frac{\alpha}{2}} (W_t-W_{k\gamma})
\end{aligned}
\]
\text{for }$t \in [k\gamma, (k + 1)\gamma]$, \text{with initial state } $X_0 \sim \nu_0$, where $\gamma>0$ is the step-size, $S_\theta$ is a neural network approximating the score defined in \eqref{def:score}, and $\{W_t\}_{t\geq 0}$ is a Wiener process on $H$ independent of $X_t$. For $\gamma \in \left(0,\frac{1}{\sqrt{128}\textup{Tr}(C^{\alpha}) L}\right]$, we have \[
\begin{aligned}
 & \frac{1}{N\gamma} \int_{0}^{N\gamma} \left( \int \left\| C^{\frac{\alpha}{2}}\nabla_{X} \log \frac{d\nu_t}{d\mu^{\textbf{y}}}(X) \right\|_H^2 d\nu_t (X) \right) dt  \\ & \leq \frac{4 \textup{KL}(\nu_0||\mu^{\textbf{y}})}{N\gamma } + \big(\big(16 \sqrt{\textup{Tr}(C^\alpha)}+64\big) \textup{Tr}(C^\alpha) L^2 \big) \gamma + \underbrace{A_1}_{\substack{\text{Score Mismatch} \\ \text{Error}}} \tau^2 +\underbrace{A_2}_{\substack{\text{Approximation} \\ \text{Error}}}\epsilon^2_\tau,
\end{aligned}
\]
where $L=\max\left\{\sqrt{\textup{Tr}(C^{\alpha-2})L_\tau^2 + \textup{Tr}(C^\alpha)L^2_{\Phi_0}},L_{\Phi_0}\right\}$ and $N>0$ is the total number of iterations. Note that the constants $A_1$ and $A_2$ are independent of $\gamma$, $\tau$, and $\epsilon_\tau$. 
\end{theorem}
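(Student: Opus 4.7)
The plan is to adapt the Fokker--Planck energy-dissipation argument of \citet{sun2023provable} to function space, replacing the Lebesgue-density calculus by computations against the Gaussian reference $\mu_0$ and leveraging Assumption~\ref{assumption:factorisation} to keep every constant dimension-free. The natural idealised dynamics is the Langevin-type SDE $dX_t^{\star}=\bigl(-C^{\alpha}C_{\mu_0}^{-1}X_t^{\star}-C^{\alpha}\nabla\Phi_0(X_t^{\star})\bigr)dt+\sqrt{2}\,C^{\alpha/2}\,dW_t$, which by Proposition~\ref{prop:score-gaussian} (in the limit $\tau\to 0$) and the Radon--Nikodym formula \eqref{def:posterior} is reversible with respect to $\mu^{\textbf{y}}$. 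First I would record the clean dissipation identity $\tfrac{d}{dt}\mathrm{KL}(\nu_t^{\star}\Vert\mu^{\textbf{y}})=-\int_H\|C^{\alpha/2}\nabla\log(d\nu_t^{\star}/d\mu^{\textbf{y}})\|_H^{2}\,d\nu_t^{\star}$ for this idealised process, using Assumption~\ref{assumption:factorisation} so that the tail coordinates $j>D_0$ evolve as independent one-dimensional Ornstein--Uhlenbeck processes whose invariant law is the corresponding factor of $\mu^{\textbf{y}}$, and hence contribute nothing to the Fisher-information functional.

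Second, I would perform the same computation along the actual interpolation \eqref{def:nu_t}, whose drift $b_t$ is piecewise constant, evaluated at $X_{k\gamma}$, and uses the learned score $S_\theta$ rather than the true posterior score. This yields the same dissipation term plus a cross term which I would split by Young's inequality: one half is absorbed into the dissipation, and the other becomes $2\,\mathbb{E}\|C^{-\alpha/2}(b_t(X_{k\gamma})-b_t^{\star}(X_t))\|_H^{2}$. The error $b_t-b_t^{\star}$ decomposes as (i) $C^{\alpha-1}(S_\theta(\tau,\cdot;\mu_0)-S(\tau,\cdot;\mu_0))$, bounded by $\epsilon_\tau$ via Assumption~\ref{assumption:score} and the eigenvalue ratios in Assumption~\ref{assumption-C-Cmu}, producing the $A_2\epsilon_\tau^{2}$ contribution; (ii) $C^{\alpha-1}(S(\tau,\cdot;\mu_0)+C\,C_{\mu_0}^{-1}\cdot)$, bounded by the mode-wise expansion in Corollary~\ref{prop:score} to give the $A_1\tau^{2}$ term; and (iii) two Lipschitz discrepancies between $t$ and $k\gamma$, namely $C^\alpha(\nabla\Phi_0(X_t)-\nabla\Phi_0(X_{k\gamma}))$ and $C^{\alpha-1}(S_\theta(\tau,X_t)-S_\theta(\tau,X_{k\gamma}))$, controlled by Assumptions~\ref{assumption:phi}--\ref{assumption:score}.

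Third, I would bound $\mathbb{E}\|X_t-X_{k\gamma}\|_H^{2}$ directly from \eqref{def:nu_t} using It\^o's isometry on $H$, which yields $2\,\mathrm{Tr}(C^{\alpha})(t-k\gamma)$ for the Brownian part and a drift contribution that closes by a one-step Gr\"onwall argument; the Lipschitz constants from (iii) aggregate into precisely the $L$ stated in the theorem, and the $O(\gamma)$ coefficients collect into $16\sqrt{\mathrm{Tr}(C^{\alpha})}+64$. The step-size condition $\gamma\le 1/(\sqrt{128}\,\mathrm{Tr}(C^{\alpha})L)$ is exactly what is needed for this Young/Gr\"onwall loop to close with a strictly positive coefficient on the Fisher-information term. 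Integrating the resulting differential inequality over $[0,N\gamma]$, discarding $\mathrm{KL}(\nu_{N\gamma}\Vert\mu^{\textbf{y}})\ge 0$, and dividing by $N\gamma$ produces the four-term bound of Theorem~\ref{thm:1}.

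The step I expect to be the main obstacle is the rigorous differentiation of $\mathrm{KL}(\nu_t\Vert\mu^{\textbf{y}})$ in infinite dimensions: one must establish $\nu_t\ll\mu^{\textbf{y}}$ for $t>0$, justify interchange of the time derivative with the integral over $H$, and identify the resulting quadratic form with the Dirichlet form of the idealised SDE. Assumption~\ref{assumption:factorisation} is the essential technical device here: on the modes $j>D_0$ the process collapses to independent OU diffusions whose laws match the tail of $\mu^{\textbf{y}}$, so both the KL divergence and the Fisher-information functional in \eqref{eq:convergence-measure} split into a nontrivial $D_0$-dimensional part plus an identically vanishing tail, circumventing the absence of a Lebesgue reference and making every constant in the final bound genuinely independent of the ambient dimension.
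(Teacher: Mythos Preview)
Your overall architecture---dissipation identity for the KL divergence, Young splitting of the cross term into score-mismatch/approximation/discretisation errors, bound on $\mathbb{E}\|X_t-X_{k\gamma}\|_H^2$, step-size constraint to absorb the discretisation back into the Fisher term, then integration over $[0,N\gamma]$---coincides with the paper's proof. Two points, however, deserve attention.

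\medskip
\noindent\textbf{A methodological difference.} You propose to differentiate $\mathrm{KL}(\nu_t\Vert\mu^{\mathbf y})$ directly on $H$, ``replacing the Lebesgue-density calculus by computations against the Gaussian reference $\mu_0$''. The paper does \emph{not} do this. It projects the interpolated process onto $H^D$ with $D>D_0$, where Lebesgue densities exist, carries out the Fokker--Planck and integration-by-parts computations there, and only then passes $D\to\infty$ using Theorem~\ref{item:KL-projection}, disintegration (Theorem~\ref{thm:disintegration}), and the factorisation in Assumption~\ref{assumption:factorisation} to identify the limit with the infinite-dimensional Fisher functional \eqref{def:distance}. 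Your direct route is in principle viable but would require setting up the Fokker--Planck equation relative to a Gaussian reference (essentially an OU Dirichlet-form computation) and justifying the time-differentiation of the relative entropy there; the paper's projection-then-limit argument sidesteps this entirely and is the actual technical content of its Lemmas~\ref{lem:1}--\ref{lem:2}.

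\medskip
\noindent\textbf{A genuine gap.} In your third step you write that the drift contribution to $\mathbb{E}\|X_t-X_{k\gamma}\|_H^2$ ``closes by a one-step Gr\"onwall argument''. It does not. The Gr\"onwall/rearrangement only yields
\[
\mathbb{E}\|X_t-X_{k\gamma}\|_H^2 \;\lesssim\; (t-k\gamma)^2\,\mathbb{E}_{\nu_t}\bigl\|C^{-\alpha/2}\mathcal G(X_t)\bigr\|_H^2 + \mathrm{Tr}(C^\alpha)(t-k\gamma),
\]
with $\mathcal G(X)=-C^{\alpha-1}S_\theta(\tau,X;\mu_0)-C^\alpha\nabla\Phi_0(X)$, and the quantity $\mathbb{E}_{\nu_t}\|C^{-\alpha/2}\mathcal G(X_t)\|_H^2$ is \emph{not} a priori bounded---no second-moment control on $X_t$ is assumed. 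The paper closes this loop with a separate integration-by-parts identity (its Lemma~\ref{lem:2}):
\[
\mathbb{E}_{\nu_t}\bigl\|C^{-\alpha/2}\mathcal G(X)\bigr\|_H^2 \;\le\; 2\!\int\Bigl\|C^{\alpha/2}\nabla\log\tfrac{d\nu_t}{d\mu^{\mathbf y}}\Bigr\|_H^2 d\nu_t \;+\; 4\,\mathrm{Tr}(C^\alpha)L_{\Phi_0} \;+\; (\text{score errors}),
\]
obtained by writing $\mathbb{E}_{\nu_t^D}[\langle\nabla\log(\mu^{\mathbf y})^D,\nabla\log\nu_t^D\rangle]=-\mathbb{E}_{\nu_t^D}[\mathrm{div}((C^\alpha)^D\nabla\log(\mu^{\mathbf y})^D)]\le\mathrm{Tr}(C^\alpha)L_{\Phi_0}$ via Assumption~\ref{assumption:phi}. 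This feeds $\mathbb{E}\|\mathcal G\|^2$ back into the Fisher term with a coefficient $O(\gamma^2\mathrm{Tr}(C^\alpha)L^2)$, which the step-size condition then makes $\le 1/2$. The $16\sqrt{\mathrm{Tr}(C^\alpha)}$ you quote in the $O(\gamma)$ constant comes precisely from the $4\,\mathrm{Tr}(C^\alpha)L_{\Phi_0}$ produced by this lemma; without it the argument does not close. Your plan should flag this identity explicitly rather than absorbing it into ``Gr\"onwall''.
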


\begin{proof}[Proof. \textup{(Sketch)}\nopunct] We define the stochastic process
\[
X_t := X_0 + t \big( C^{\alpha-1} S(\tau,X_0; \mu_0)  +C^\alpha \nabla_{X_0} \log (\rho(\textbf{y} -\mathcal{A}(X_0)))\big) + 2^\frac{1}{2}C^{\frac{\alpha}{2}}  W_t, \quad X_0 \sim \nu_0.
\]
We derive the evolution equation for $\nu_t$ (the probability measure of $X_t$) and plug it into
the time derivative formula for KL($\nu_t||\mu^\textbf{y}$). 

We derive a bound relating $\int_H \left\| C^{\frac{\alpha}{2}}\nabla_X \log \frac{d\nu_t}{d\mu^{\textbf{y}}}(X) \right\|_H^2 d\nu_t (X)$  and the expected square $H$-norm $\mathbb{E}\big[\|
C^{\frac{\alpha}{2}-1} S(\tau,X; \mu_0)  +C^{\frac{\alpha}{2}} \nabla_{X} \log (\rho(\textbf{y} -\mathcal{A}(X)))
\|_H^2\big]$.

We construct a linear interpolation of \eqref{def:PMC-RED-infty}, make use of the aforementioned bounds and Assumptions \ref{assumption-C-Cmu}--\ref{assumption:factorisation}, and integrate the time derivative of the KL divergence over the interval $[k\gamma, (k+1)\gamma]$ to obtain a convergence bound that is dimension-free and depends explicitly on the score mismatch and the network approximation errors.

The full proof can be found in Appendix \ref{appendix:conv-thm}.
\end{proof}

\begin{corollary}[Stationarity of $\infty$-PMC-RED] \label{corollary:convergence}
Let $\alpha\geq3$. If $\gamma$, $\tau$, and $\epsilon_\tau$ are sufficiently small, then $\nu_t$ converges to $\mu^{\textbf{y}}$ in terms of \eqref{def:distance} at the rate of $\mathcal{O}\big( \frac{1}{N} \big)$.
\end{corollary}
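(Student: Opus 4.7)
The plan is to read Corollary 2 as a direct consequence of Theorem 1 after separating the three bias terms (the $\tau^2$, $\epsilon_\tau^2$ and $\gamma$-linear contributions) from the $1/(N\gamma)$ term and showing that, once those bias terms are driven below a fixed tolerance, the remaining dependence on $N$ decays as $\mathcal{O}(1/N)$.

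First I would verify that the choice $\alpha \geq 3$ is exactly what is needed to keep every constant appearing on the right-hand side of Theorem 1 finite, so that the bound is dimension-free as promised. Since $C$ is trace class and positive, its eigenvalues $\lambda_j$ sum to a finite number and are therefore uniformly bounded; hence $\text{Tr}(C^k) \leq \|C\|^{k-1}\,\text{Tr}(C) < \infty$ for every $k \geq 1$, while for $k < 1$ finiteness would require stronger decay of the $\lambda_j$ than trace class alone. The Lipschitz constant $L$ of Theorem 1 contains $\text{Tr}(C^{\alpha-2})L_\tau^2 + \text{Tr}(C^\alpha) L_{\Phi_0}^2$, which is finite iff $\alpha - 2 \geq 1$ and $\alpha \geq 1$, that is, iff $\alpha \geq 3$. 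For such $\alpha$ the admissible step-size range in Theorem 1 and the coefficient of the $\gamma$-linear term on its right-hand side are well-defined finite numbers.

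Fix $\alpha \geq 3$ and any target accuracy $\delta > 0$. I would then pick the three scales sequentially: choose $\tau$ small enough that $A_1\tau^2 \leq \delta/3$ (possible since $A_1$ is independent of $\tau$, and Corollary 1 ensures the score mismatch vanishes as $\tau \to 0$), select a score network whose uniform error $\epsilon_\tau$ makes $A_2 \epsilon_\tau^2 \leq \delta/3$ (keeping in mind the trade-off flagged in Remark 6, which prevents pushing $\tau$ arbitrarily close to $0$ while keeping $\epsilon_\tau$ bounded), and finally pick $\gamma$ in the admissible interval of Theorem 1 small enough that $(16\sqrt{\text{Tr}(C^\alpha)}+64)\text{Tr}(C^\alpha)L^2\gamma \leq \delta/3$. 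Assuming $\text{KL}(\nu_0\|\mu^{\textbf{y}}) < \infty$—a mild initialisation condition, satisfied for instance by $\nu_0 = \mu_0$ under Assumption 1 and integrability of the noise density—the bound of Theorem 1 then reads
\[
\frac{1}{N\gamma}\int_0^{N\gamma} \int \left\|C^{\alpha/2}\nabla_{X}\log\tfrac{d\nu_t}{d\mu^{\textbf{y}}}(X)\right\|_H^{2} d\nu_t(X)\,dt \leq \frac{4\,\text{KL}(\nu_0\|\mu^{\textbf{y}})}{N\gamma}+\delta,
\]
with $\delta$ independent of $N$, so that the time-averaged Fisher-like distance (5) decays as $\mathcal{O}(1/N)$ in the iteration count.

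The only real obstacle is the algebraic book-keeping establishing that $\alpha \geq 3$ is simultaneously necessary (to keep $\text{Tr}(C^{\alpha-2})$ finite) and sufficient (to keep every other trace in the bound finite); once this is settled, the remainder is a straightforward combination of inequalities and parameter choices.
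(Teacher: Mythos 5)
Your proof is correct and follows the natural route: Corollary~\ref{corollary:convergence} is a direct reading of the bound in Theorem~\ref{thm:1}, and the only content is (i) identifying $\alpha\geq 3$ as the condition that keeps $L$ and $A_2$ finite via $\textup{Tr}(C^{\alpha-2})<\infty$, which holds for any trace-class $C$ once $\alpha-2\geq 1$, and (ii) choosing $\tau$, $\epsilon_\tau$, $\gamma$ sequentially to push the bias terms below an arbitrary tolerance, leaving the $4\,\textup{KL}(\nu_0\|\mu^{\mathbf{y}})/(N\gamma)$ term as the only $N$-dependent contribution. The paper leaves this reasoning implicit, so there is no competing argument to contrast with; you have simply spelled it out. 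One small imprecision worth flagging: you claim $\alpha\geq 3$ is \emph{necessary} for $\textup{Tr}(C^{\alpha-2})<\infty$, but this is only true in the worst case over all trace-class $C$. For a particular operator with faster-than-summable eigenvalue decay (e.g.\ exponential), smaller $\alpha$ would still give a finite bound. The corollary's hypothesis should be read as a sufficient, operator-independent choice, not a sharp threshold; your sequential $\delta/3$ splitting and the caveat about the Remark~\ref{remark:truncation} trade-off between $\tau$ and $\epsilon_\tau$ are both handled correctly.
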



\begin{remark}
\label{remark:thm1-2} 
A well-known heuristic to mitigate mode collapse and accelerate the sampling speed of Langevin MCMC algorithms is weighted annealing \cite{kirkpatrick1983optimization, neal2001annealed, song2019generative}. 
It consists of replacing $S_\theta(\tau,X_k;\mu_0)$ in \eqref{def:PMC-RED-infty} by $\eta_k S_\theta(\tau_k,X_k;\mu_0)$, where $(\eta_k)$ and $(\tau_k)$ decay from large initial values to $1$ and almost $0$, respectively. Following \cite[Proof of Theorem 3]{sun2023provable}, it is easy to show that weighted annealing will not introduce extra error in the convergence accuracy, even in the infinite-dimensional case.
\end{remark}

\section{Numerical Experiments}\label{sec:num}

We present two examples where we apply $\infty$-PMC-RED. In the first stylized example, we verify the accuracy of our method by applying it to a simple benchmark distribution. In the second example, we sample from the posterior of a nontrivial nonlinear PDE-based inverse problem, where the forward operator is the acoustic wave equation. We use $\infty$-PMC-RED to obtain samples from these distributions and adapt the preconditioning strategy proposed in \cite{LiEtAl_2016} to accelerate the convergence of the algorithm. We use \href{https://www.devitoproject.org/}{Devito} \citep{louboutin2019devito, luporini2020architecture} for the wave-equation-based simulations. All experiments were conducted on an Intel Xeon E5-2698 v4 CPU with 252GB of RAM.

\paragraph{Stylized example}
Following \citet{PaganiEtAl_2022}, we consider the 2D Rosenbrock distribution with density $p(x_1, x_2) \propto \exp ( - \frac{1}{2} x_{1}^{2} - (x_{2}-x_{1}^{2} )^{2})$, which can be directly sampled. It is chosen as the target distribution due to its shape: a curved, narrow ridge, as shown in Figure \ref{fig:stylized_true}, which usually poses a challenge for MCMC algorithms to explore \cite{PaganiEtAl_2022}.
We run $\infty$-PMC-RED with a starting step size of $\gamma=4.0$, which we decrease to $0.05$ over $10^5$ iterations (time of execution is about one minute) to ensure faster mixing of the chain. We designate the initial $10^3$ iterations as the burn-in phase, determined by the value of the unnormalized density during this period. The subsequent iterations, with a lag of $20$, serve as samples from the target distribution (5000 samples in total). Figure \ref{fig:stylized_estimated} shows the samples obtained from this process, which upon visual inspection indicate good agreement with the true samples of the Rosenbrock distribution.

\begin{figure}[t]
    \centering
    \begin{subfigure}[b]{0.4\textwidth}
        \centering
        \includegraphics[width=\textwidth]{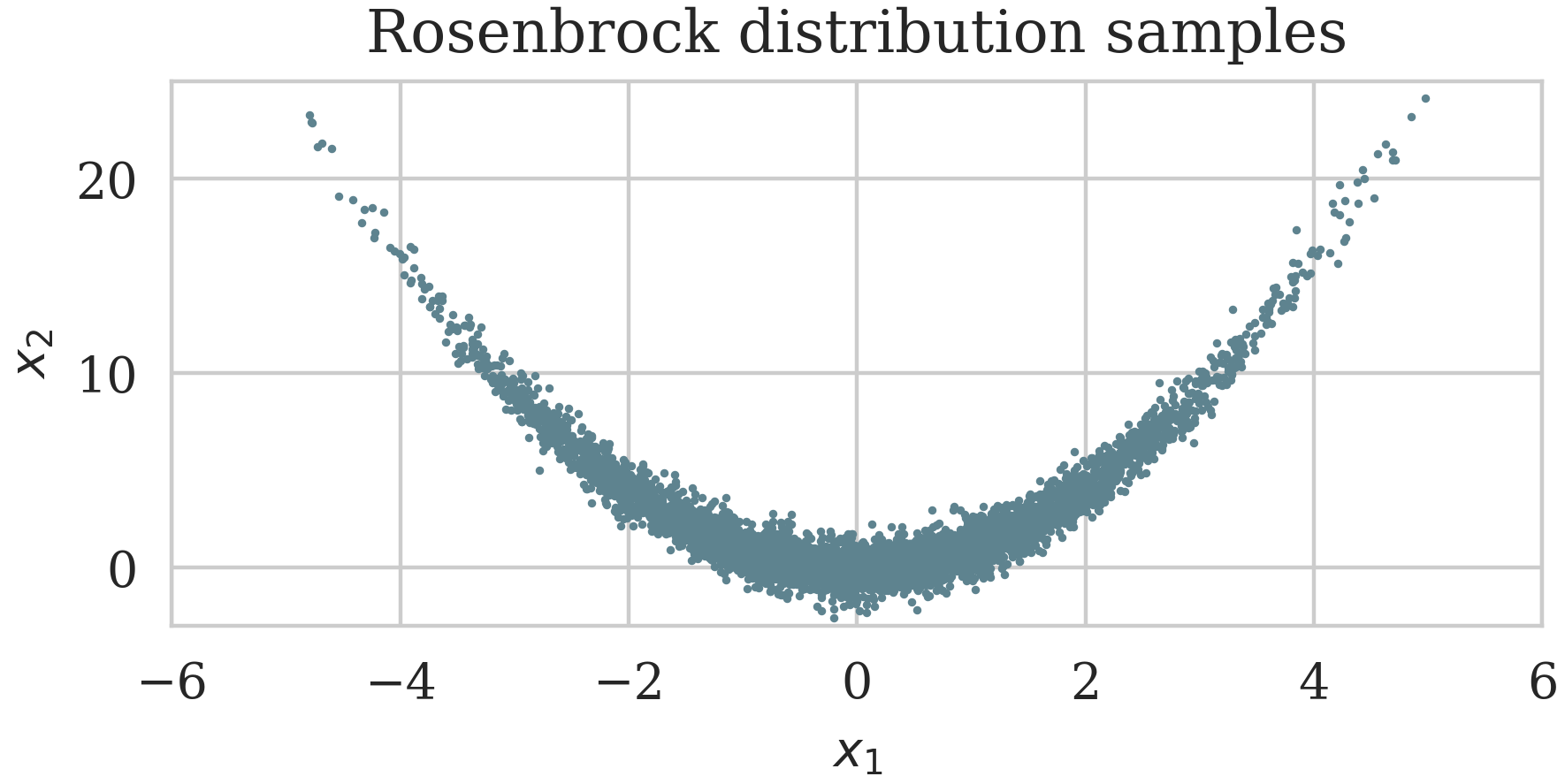}
        \caption{}
        \label{fig:stylized_true}
    \end{subfigure}
    \hspace*{2em}
    \begin{subfigure}[b]{0.4\textwidth}
        \centering
        \includegraphics[width=\textwidth]{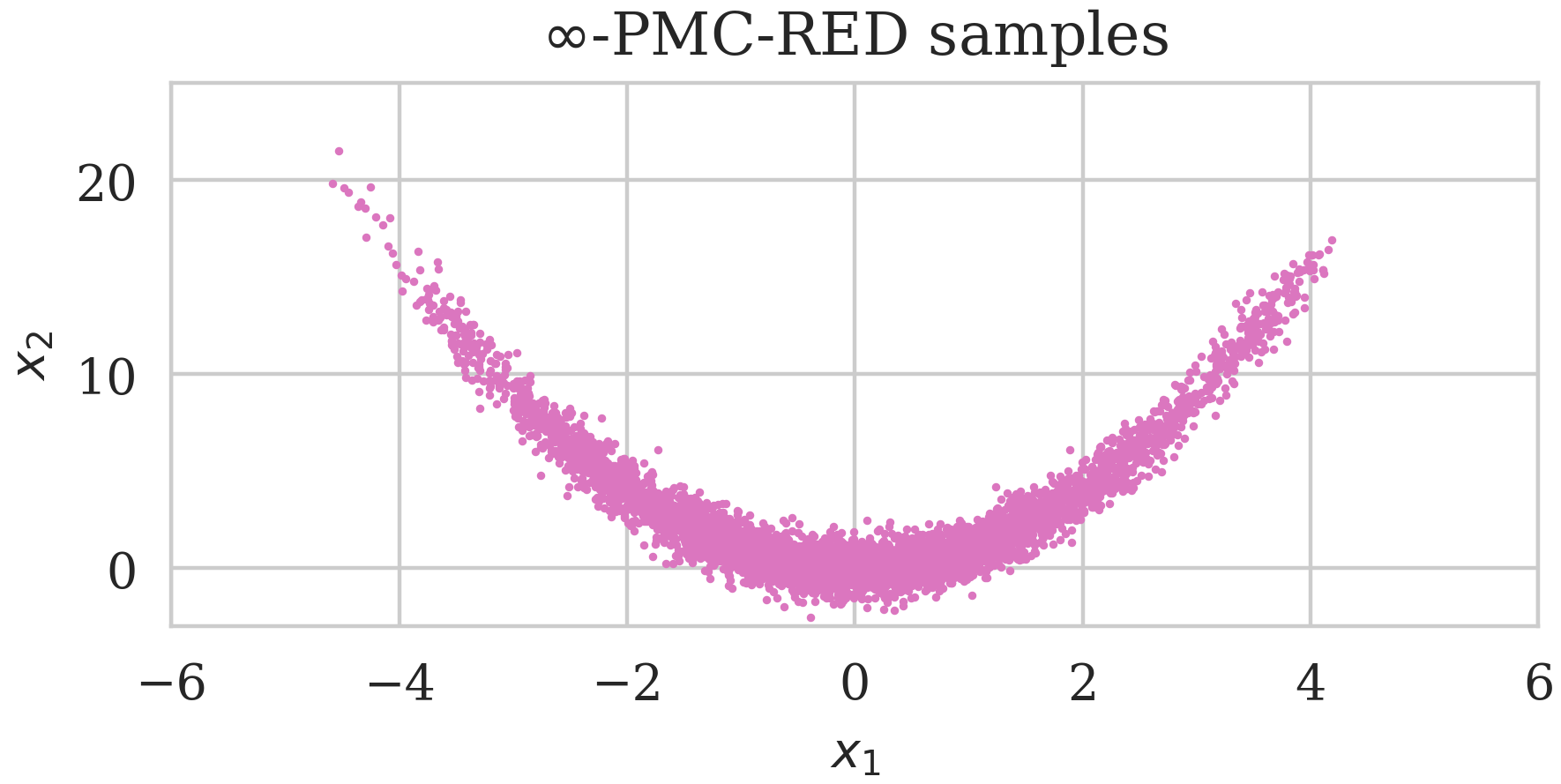}
        \caption{}
        \label{fig:stylized_estimated}
    \end{subfigure}

    \caption{Comparison of samples from (a) the true 2D Rosenbrock distribution and (b) samples obtained using $\infty$-PMC-RED.}
    \label{fig:toy_example}
\end{figure}

\paragraph{Nonlinear wave-equation based imaging example}

In this experiment, we address the problem of estimating the Earth's unknown subsurface squared-slowness model (see Figure \ref{fig:true_model}, which is a $250\PLH400$-dimensional 2D subset of the Compass model \cite{JonesEtAl_2012}) given measurements taken at the surface \cite{Tarantola_1984, VirieuxEtAl_2009}. These measurements are obtained by recording the seismic waves, generated by $101$ equispaced sources, through an array of $201$ equispaced receivers on the Earth's surface. Due to noise and the non-trivial nullspace of the forward operator, which arise from the presence of shadow zones and finite-aperture data \citep{lambare1992iterative, nemeth1999least}, prior information is required to obtain an accurate estimate of the squared-slowness model. Following \citet{PetersEtAl_2017}, we use the total variation norm, an edge-preserving prior commonly used in seismic imaging and image processing \cite{RudinEtAl_1992, PetersEtAl_2017}, to regularize the problem. Sampling involves running the $\infty$-PMC-RED algorithm with a starting step size of $\gamma=10^{-3}$, which we decrease to $\gamma=10^{-4}$ over $3 \times 10^4$ iterations according to the step size decay function introduced by \citet{TehEtAl_2016} (time of execution is about $22.5$ hours) to ensure faster mixing of the chain. During each iteration, and to reduce the number of PDE solves, akin to the stochastic gradient Langevin dynamics \cite{WellingEtAl_2011}, we obtain a stochastic approximation of the likelihood function by using one randomly selected source experiment from the total of 101 source experiments. We initialize the chain with an initial guess that is a heavily smoothed true model with a Gaussian kernel (see red lines in \ref{fig:vertical_profile}) and designate the initial $10^4$ iterations as the burn-in phase, determined by the value of the unnormalized density during this period. The subsequent iterations, with a lag of $10$, serve as samples from the target distribution ($2000$ samples in total). Figure \ref{figs:fwi} summarizes the results. Using the obtained posterior samples, we estimate the conditional mean (see Figure \ref{fig:conditional_mean}) and the pointwise standard deviation (see Figure \ref{fig:pointwise_std}) of the posterior distribution, using the latter to quantify the uncertainty. As expected, the regions of significant uncertainty correspond well with challenging-to-image sections of the model, qualitatively confirming the accuracy of our Bayesian inference method. This observation is more apparent in Figure \ref{fig:vertical_profile}, which shows two vertical profiles with $99\%$ confidence intervals (depicted as orange-colored shading), demonstrating the expected trend of increased uncertainty with depth. Furthermore, we notice that the ground truth (indicated by dashed black lines) mostly falls within the confidence intervals in most areas. We also observe a strong correlation between the pointwise standard deviation and the error in the conditional mean estimate (Figure~\ref{fig:model_error}), confirming the accuracy of our Bayesian inference method. 


\begin{figure}[!t]
    \centering
    \captionsetup[subfigure]{skip=-9pt}
    \hspace*{-1em}
    \begin{tabular}[t]{cc}
        \begin{tabular}{c}
        \begin{subfigure}[b]{0.35\textwidth}
            \includegraphics[width=\textwidth]{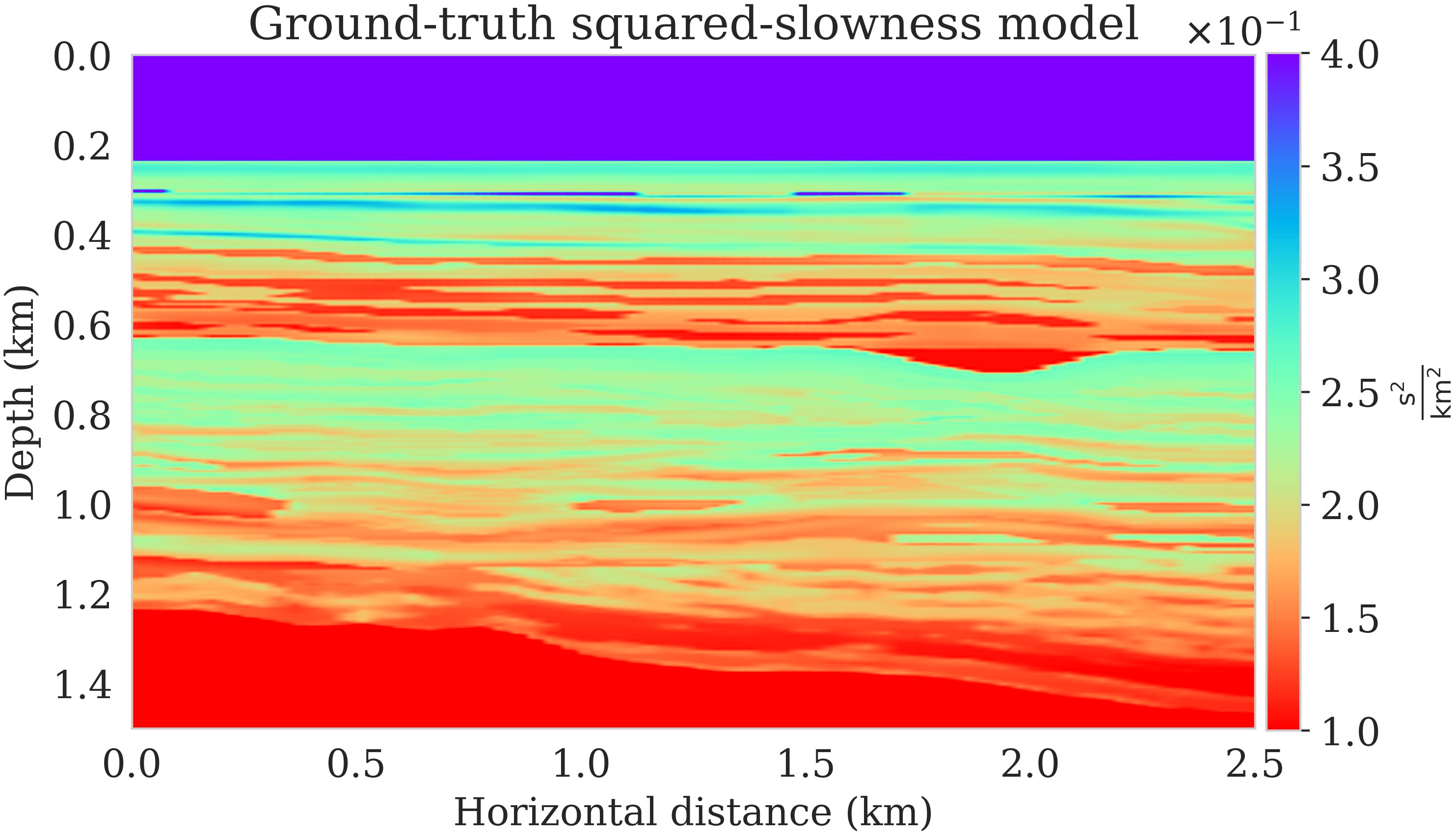}
        \vspace{0ex}\caption{}
        \label{fig:true_model}
        \end{subfigure}\hspace{0.5em}
        \begin{subfigure}[b]{0.35\textwidth}
            \includegraphics[width=\textwidth]{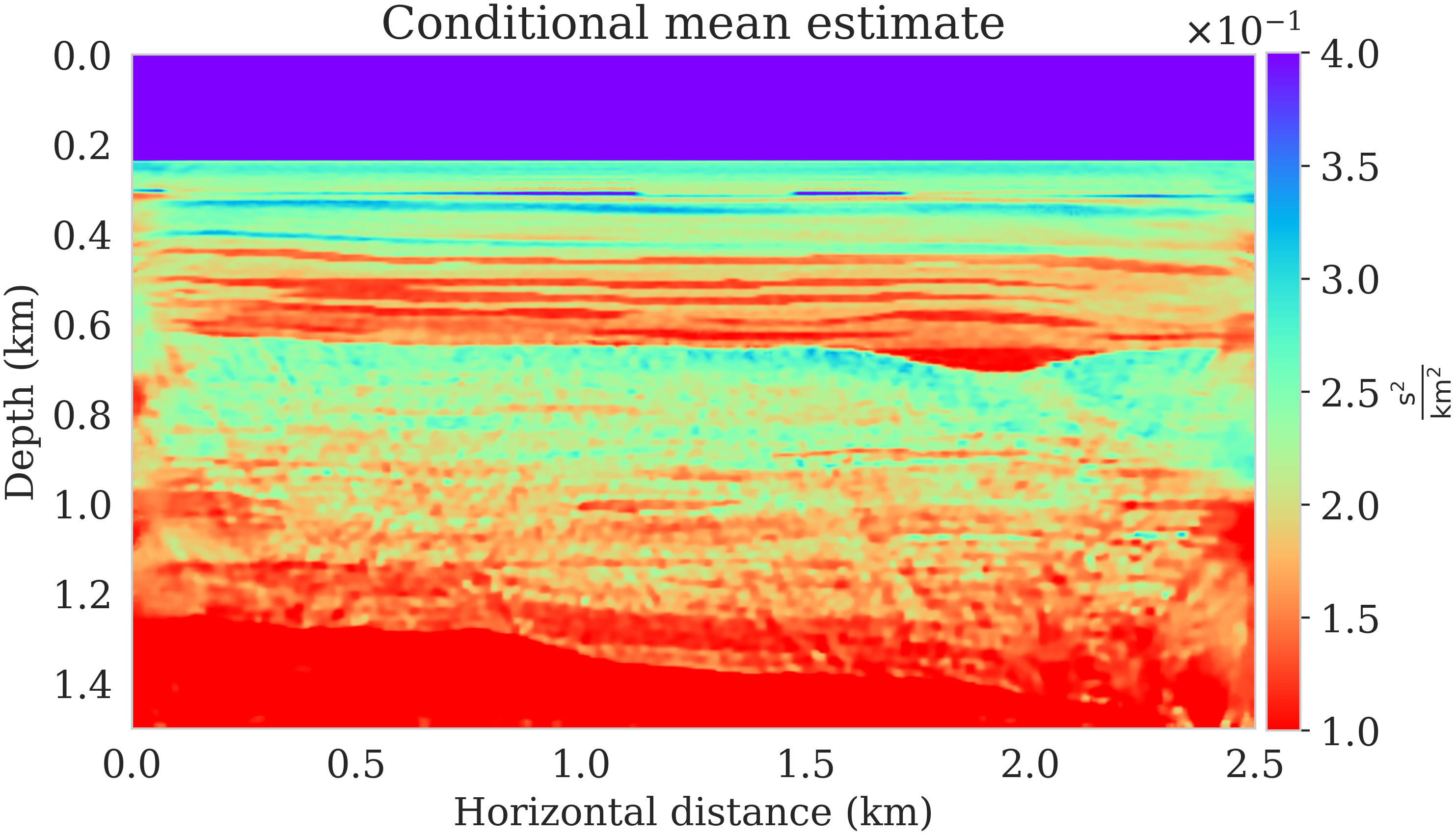}
        \vspace{0ex}\caption{}
        \label{fig:conditional_mean}
        \end{subfigure}\hspace{0em}
        \\
        \begin{subfigure}[b]{0.35\textwidth}
            \includegraphics[width=\textwidth]{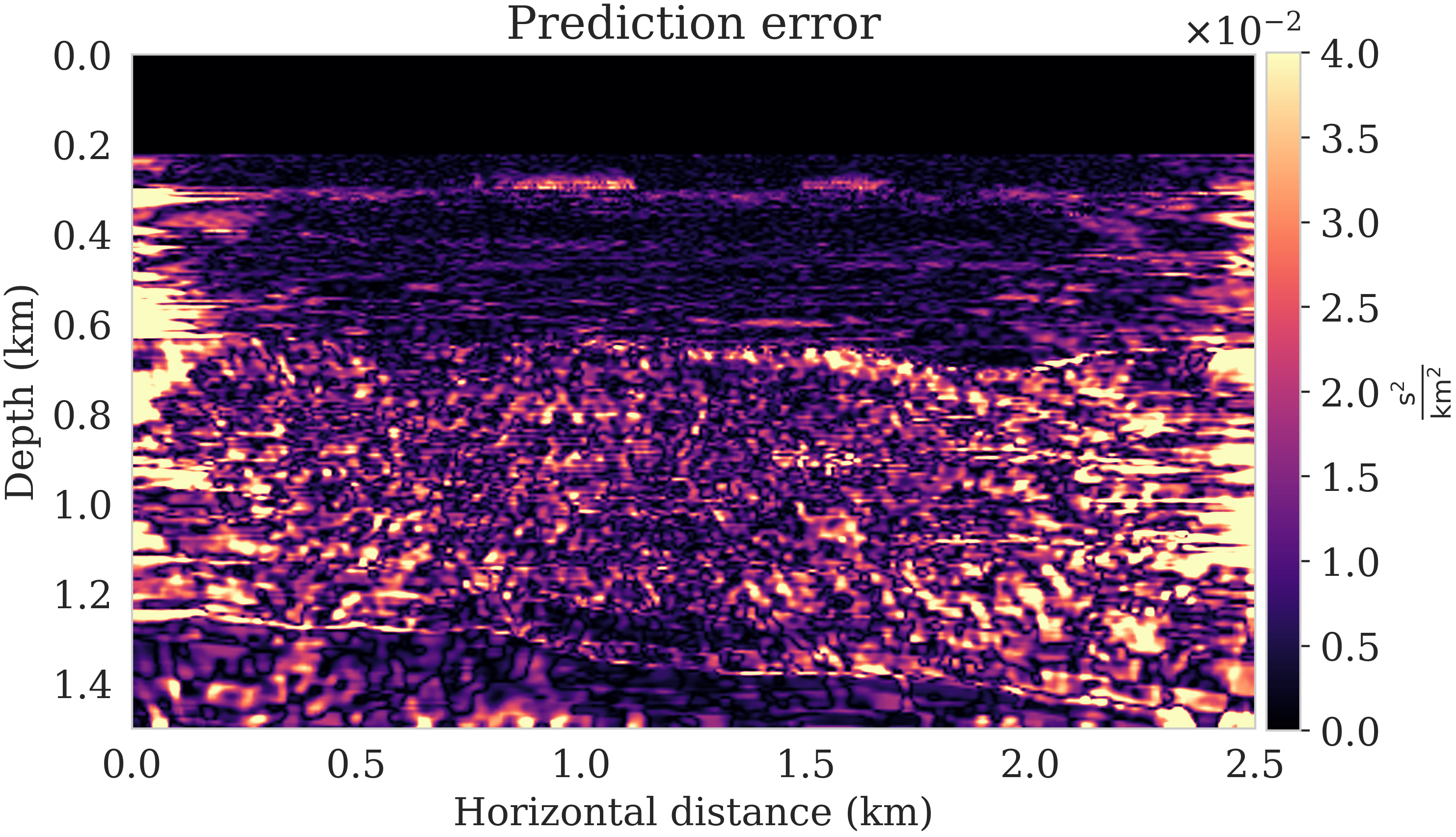}
        \vspace{0ex}\caption{}
        \label{fig:model_error}
        \end{subfigure}\hspace{0.85em}
        \begin{subfigure}[b]{0.35\textwidth}
            \includegraphics[width=\textwidth]{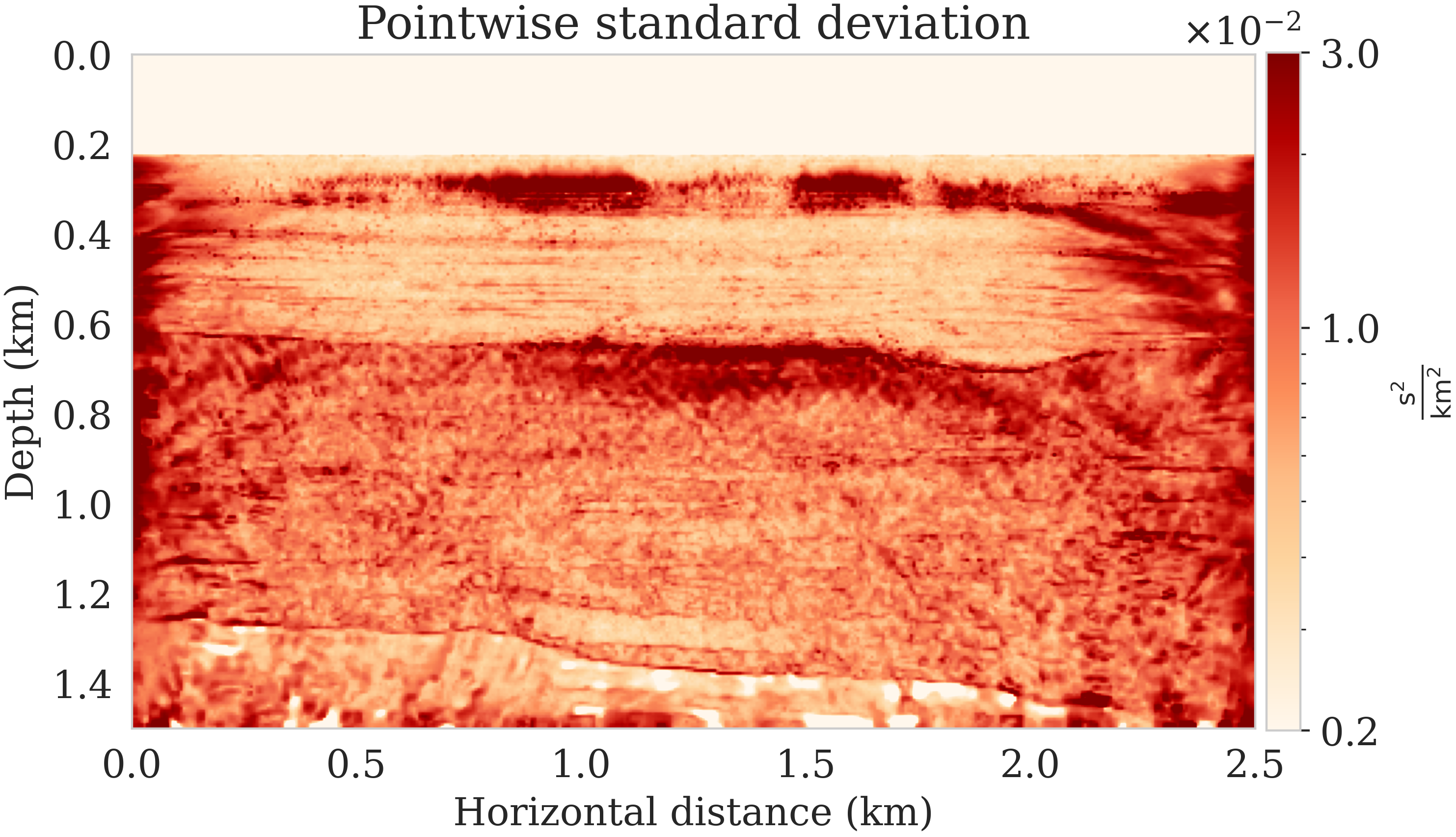}
        \vspace{0ex}\caption{}
        \label{fig:pointwise_std}
        \end{subfigure}\hspace{0em}
        \end{tabular}
    &
    \hspace{-2.5em}
    \begin{tabular}{c}
        \begin{subfigure}[b]{0.23\textwidth}
            \includegraphics[width=\textwidth]{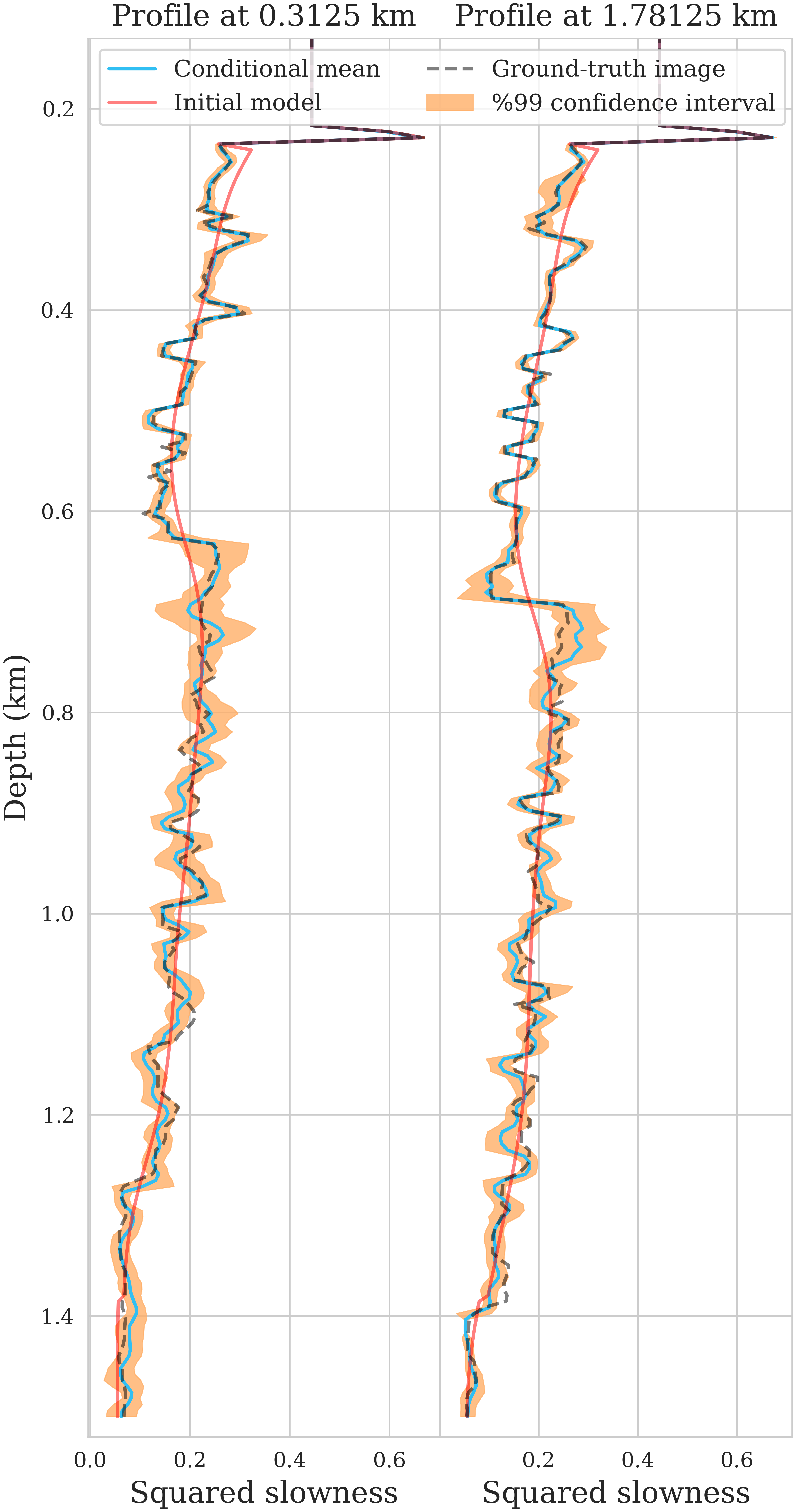}
        \vspace{0ex}\caption{}
        \label{fig:vertical_profile}
        \end{subfigure}\hspace{0em}
            \end{tabular}
    \end{tabular}
    \caption{Nonlinear wave-equation-based imaging and uncertainty quantification. (a) Ground-truth squared-slowness model. (b) Conditional (posterior) mean. (c) Absolute error between Figures \ref{fig:true_model} and \ref{fig:conditional_mean}. (d) Pointwise standard deviation. (e) Vertical profiles of the ground-truth squared-slowness mode, conditional mean estimate, and the $99\%$ confidence interval at two lateral positions.}
        \label{figs:fwi}
    \end{figure}

\section{Discussion and Conclusion}

We provided theoretical guarantees for a method using score-based diffusion priors within a Langevin-type MCMC scheme for functions. Extending the non-asymptotic convergence analysis of \citet{sun2023provable} to infinite dimensions, we proved that our algorithm converges to the true posterior, even with non-log-concave likelihoods and imperfect prior scores. We only require the log-likelihood to be continuously diﬀerentiable and globally Lipschitz, and we assume that only a finite number of modes contributes to the observations, which is quite realistic from an applications point of view. Crucially, the obtained convergence bound is dimension-free, ensuring the robustness of our analysis in the limit of infinite dimension. 

Despite the correctness of our convergence analysis, our method faces challenges, shared with similar works \cite{sun2023provable, feng2023score, xu2024provably}, regarding learning the score, which we did not address here. Indeed, to accurately sample the posterior distribution, the SDM must precisely estimate scores for both the initial point in the MCMC chain and all points during the burn-in phase. 
However, when $\tau$ is small, since there is no guarantee that the MCMC chain explores the high-probability regions of the prior during burn-in, the estimated scores might be inaccurate, possibly preventing the chain from converging to the true posterior. One possible heuristic for addressing this issue involves diffusing the samples from the prior at different decaying times, and estimating the corresponding scores by training a single score network,
as suggested by \cite{song2019generative}.
Another challenge comes from the non-convexity originating from the nonlinearity of the inverse problem, leading to computational challenges, especially in high dimensions. If not handled properly, $\infty$-PMC-RED may converge very slowly or, worse, get stuck in erroneous local minima. \citet{nickl2023bayesian} offers algorithmic guarantees that require strong assumptions on the forward model. Our work addresses the question of convergence with weaker assumptions and provides a heuristic, weighted annealing, for dealing with the issues of mixing time and local minima. We leave to future work the theoretical analysis of the computational complexity of $\infty$-PMC-RED, with the objective of formulating weaker assumptions than those proposed by \citet{nickl2023bayesian}.


\section*{Acknowledgments}
JG was supported by Agence de l’Innovation de D\'efense – AID - via Centre Interdisciplinaire d’Etudes pour la D\'efense et la S\'ecurité – CIEDS - (project 2021 - PRODIPO). AS and MVdH were supported by the Simons Foundation under the MATH + X program, the Department of Energy, BES under grant DE-SC0020345, and the corporate members of the Geo-Mathematical Imaging Group at Rice University. A significant part of the work of MVdH was carried out while he was an invited professor at the Centre Sciences des Donn\'{e}es at Ecole Normale Sup\'{e}rieure, Paris. KS was supported by Air Force Office of Scientific Research under grant FA9550-22-1-0176 and the National Science Foundation under grant DMS-2308389.
 
\bibliography{refs}


\appendix
 
\section{Proof of Proposition \ref{prop:score-gaussian}}\label{sec:A}

We define $X_\tau^{(j)} = \langle X_\tau,e_j\rangle $ and $S^{(j)}(\tau, x; \mu_0) = \langle S(\tau,x; \mu_0) , e_j \rangle $. We then have
\[
dX^{(j)}_\tau = -\frac{1}{2}X^{(j)}_\tau d\tau + \sqrt{\lambda_j} W_\tau^{(j)}.
\]
Since $C_{\mu_0}$ and $C$ have the same basis of eigenfunctions, the system of modes diagonalizes so that the $X^{(j)}_\tau$ processes are independent for different $j$ modes. Thus we have
\[
X_0^{(j)} = \sqrt{\mu_{0j}} \eta_0^{(j)}, \quad X_\tau^{(j)} = X_0^{(j)}e^{-\tau/2}+ \sqrt{\lambda_j(1-e^{-\tau})}\eta_1^{(j)}
\]
for $\eta_i^{(j)}$ independent standard Gaussian random variables. We seek
\[
x_0^{(j)} = \mathbb{E}[X_0^{(j)}|X_\tau^{(j)}=x^{(j)}],
\]
where $x_0^{(j)} = a x^{(j)}$ with $a$ solving
\[
\mathbb{E}[(aX_\tau^{(j)}-X_0^{(j)})X_\tau^{(j)}] =0,
\]
which gives
\[
a = \frac{e^{\tau/2}}{1+(e^\tau-1)p_0^{(j)}}
\]
for $$p_0^{(j)} = \frac{\lambda_j}{\mu_{0j}}.$$  Since also the time-reversed system diagonalizes, we have
\[S^{(j)}(\tau,x; \mu_0) = S^{(j)}(\tau,x^{(j)}; \mu_0) =  -\left( \frac{e^\tau p_0^{(j)}}{1+(e^\tau-1)p_0^{(j)}}\right)x^{(j)}. 
\]

\section{Proof of the Convergence Theorem}\label{appendix:conv-thm}

\subsection{Finite-Dimensional Projection}\label{appendix:notation}
Denote by $(e_j)$ the orthonormal basis of eigenvectors of a trace class, positive-definite, symmetric covariance operator $C$. 

\begin{definition}\label{def:span}
Define the linear span of the first $D$ eigenvectors as
\[
H^D := \left\{\sum_{j=1}^D f_j e_j|f_1,\ldots, f_D \in \mathbb{R} \right\} \subset H.
\]
Define $H^{D+1:\infty}$ such that $H= H^D \otimes H^{D+1:\infty}.$
\end{definition}

\begin{definition}\label{def:projection}
Let $P^D: H \to H^D$ be the orthogonal projection onto $H^D$. If we write an element $f$ of $H$ as
\[
f = \sum_{j=1}^\infty \langle f,e_j \rangle e_j,
\]
$P^D$ is equivalent to restricting $f$ to its first $D$ coefficients:
\[
P^D f = \sum_{j=1}^D \langle f,e_j \rangle e_j.
\]
\end{definition}
\begin{definition}\label{def:pushforward}
The push-forward $P^D_{\#}\mu$ of $\mu$ under $P^D$ is denoted by
\[
\mu^D :=P^D_{\#}\mu, \qquad \text{where} \quad P^D_{\#}\mu(A) = \mu((P^D)^{-1}(A)). 
\]
\end{definition}
\subsection{Useful Results on Measure Theory} 
Here we review some basic measure-theoretic tools needed in the proof of Theorem \ref{thm:1}.  
\begin{theorem}[Disintegration \cite{ambrosio2005gradient}]\label{thm:disintegration} Let $\mathcal{P}(H)$ be the family of all Borel probability measures on $H$. Let $H$,$Y$ be Radon separable metric spaces, $\mu \in \mathcal{P}(H)$, and $\pi: H \to Y$ a Borel map. Then there exists a $\pi_{\#}\mu$-a.e. uniquely determined Borel family of probability measures $\{\mu_y\}_{y\in Y} \subset \mathcal{P}(H)$ such that
\[
\mu_y(H\setminus \pi^{-1}(y)) = 0 \qquad for \; \pi_{\#}\mu\text{-}a.e. \; y \in Y
\]
and
\[
\int_H f(x) d\mu(x) = \int_Y \left( \int_{\pi^{-1}(y)} f(x) d \mu_y(x) \right) d\pi_{\#}\mu(y)  
\]
for every Borel map $f:H \to [0,\infty]$. In particular, when $H= H^D \times H^{D+1:\infty}$, $Y=H^D$, and $\pi=P^D$ (hence $\pi_{\#}\mu = \mu^D$), we can identify $\pi^{-1}(x^D)$ with $H^{D+1:\infty}$ and find a Borel family of probability measures $\{\mu_{x^D}\}_{x^D\in H^D} $ such that
\[
\mu_{x^D}(H^D)=0, \qquad \mu = \int_{H^D} \mu_{x^D} d\mu^D(x^D).
\]
\end{theorem}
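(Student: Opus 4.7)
The plan is to construct the disintegration via Radon--Nikodym densities of section measures and then upgrade these densities to a bona fide measurable family of probability measures using the Radon--separable structure of $H$.

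First, set $\nu:=\pi_\#\mu$. For every Borel set $A\subseteq H$, the set function $B\mapsto\mu(A\cap\pi^{-1}(B))$ defines a finite Borel measure on $Y$ bounded above by $\nu$, hence absolutely continuous with respect to it. The Radon--Nikodym theorem produces a $\nu$-a.e.\ unique density $\rho_A:Y\to[0,1]$ satisfying $\mu(A\cap\pi^{-1}(B))=\int_B\rho_A\,d\nu$. The candidate disintegration is then $\mu_y(A):=\rho_A(y)$. Once $y\mapsto\mu_y$ is proved to be a well-defined measurable family of probability measures, the integration identity for a general nonnegative Borel $f$ follows by linearity and monotone convergence from its validity on indicators.

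The main obstacle is precisely that the assignment $A\mapsto\mu_y(A)$ is only defined outside a $\nu$-null set that \emph{depends on $A$}; turning it into a measure requires handling uncountably many Borel sets simultaneously. The Radon--separable hypothesis on $H$ supplies a countable algebra $\mathcal{A}_0$ that generates the Borel $\sigma$-algebra. Define $\rho_A$ only for $A\in\mathcal{A}_0$ and let $N\subseteq Y$ be the (countable) union of the exceptional null sets for the finitely many relations encoding $\emptyset\mapsto 0$, $H\mapsto 1$, monotonicity, and finite additivity on $\mathcal{A}_0$, together with the null sets arising from the countable-additivity relations along increasing sequences in $\mathcal{A}_0$. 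For $y\notin N$, the map $A\mapsto\rho_A(y)$ is a countably additive, normalized premeasure on $\mathcal{A}_0$, so Carathéodory extension produces a Borel probability measure $\mu_y$ on $H$. Measurability of $y\mapsto\mu_y(A)$ for all Borel $A$ is then obtained by a monotone class argument starting from $\mathcal{A}_0$, and uniqueness $\nu$-a.e.\ follows from the uniqueness clause of Radon--Nikodym applied on $\mathcal{A}_0$ plus the same monotone class argument.

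Concentration on the fiber uses second countability of $Y$: fix a countable basis $(U_n)_{n\in\mathbb{N}}$ and apply the Radon--Nikodym identity with $A=\pi^{-1}(U_n)$ to obtain $\rho_{\pi^{-1}(U_n)}(y)=\mathbf{1}_{U_n}(y)$ $\nu$-a.e. Outside an enlarged but still $\nu$-null set, this gives $\mu_y(\pi^{-1}(U_n))=0$ whenever $y\notin U_n$; since $Y\setminus\{y\}=\bigcup\{U_n:y\notin U_n\}$ in a second-countable $T_1$ space, countable subadditivity yields $\mu_y(H\setminus\pi^{-1}(y))=0$ $\nu$-a.e. The special case $H=H^D\times H^{D+1:\infty}$ with $\pi=P^D$ is then immediate: the fiber $\pi^{-1}(x^D)$ is the translate $\{x^D\}\times H^{D+1:\infty}$, canonically identified with $H^{D+1:\infty}$, and the concentration property $\mu_{x^D}(H\setminus\pi^{-1}(x^D))=0$ transports through this identification, while the general integration formula specializes to $\mu=\int_{H^D}\mu_{x^D}\,d\mu^D(x^D)$. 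The hard step, as stressed, is the simultaneous upgrade of pointwise Radon--Nikodym densities to a single measurable family of probability measures; this is where the Radon and separability hypotheses are indispensable, without which one only obtains a weak disintegration with $A$-dependent null sets.
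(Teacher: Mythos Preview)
The paper does not prove this theorem; it is quoted from \cite{ambrosio2005gradient} as a known tool and used without argument. There is therefore no ``paper's proof'' to compare your proposal against.

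Your outline is the standard Radon--Nikodym construction of the disintegration and is correct in spirit, but one step is underjustified. You write that the exceptional set $N$ includes ``the null sets arising from the countable-additivity relations along increasing sequences in $\mathcal{A}_0$.'' Even though $\mathcal{A}_0$ is countable, the family of increasing (or decreasing) sequences in $\mathcal{A}_0$ has cardinality of the continuum, so you cannot simply take the union of the corresponding null sets and stay $\nu$-null. This is exactly the point at which the Radon hypothesis must do real work, and your text only gestures at it. The usual fixes are either (i) to use tightness of $\mu$: pick compacts $K_m$ with $\mu(H\setminus K_m)<1/m$, enlarge $\mathcal{A}_0$ to contain (approximations of) the $K_m$, and use $\int \rho_{H\setminus K_m}\,d\nu<1/m$ together with Borel--Cantelli to show that for $\nu$-a.e.\ $y$ the finitely additive set function $\rho_\bullet(y)$ is tight, hence $\sigma$-additive; or (ii) to embed $H$ into a compact metrizable space, perform the construction there (where decreasing sequences of closed sets with empty intersection are eventually empty, so $\sigma$-additivity on a suitable algebra is automatic), and restrict back. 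Either route closes the gap; as written, your argument for $\sigma$-additivity is incomplete.

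The remaining steps---measurability via a monotone class argument, fiber concentration from a countable basis of $Y$, and the specialization to $\pi=P^D$---are fine.
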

The proof of the following result can be found in \cite[Corollary 9.4.6]{ambrosio2005gradient}. 
\begin{theorem}[KL divergence and orthogonal projection] \label{item:KL-projection}
For every measures $\nu, \mu$ on $H$, we have
\[
\lim_{D \to \infty} \textup{KL}(\nu^D||\mu^D) = \textup{KL}(\nu||\mu).
\]
\end{theorem}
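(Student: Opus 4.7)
The plan is to prove two inequalities — the upper bound $\limsup_D \textup{KL}(\nu^D||\mu^D) \leq \textup{KL}(\nu||\mu)$ by a chain-rule/monotonicity argument, and the lower bound $\liminf_D \textup{KL}(\nu^D||\mu^D) \geq \textup{KL}(\nu||\mu)$ via a martingale-convergence argument, with the latter being the main work. For the upper bound I would first derive a chain rule from the disintegration theorem just stated: assuming $\nu \ll \mu$, the Radon–Nikodym derivative factorizes as
\[
\frac{d\nu}{d\mu}(x) = \frac{d\nu^D}{d\mu^D}(P^D x)\,\frac{d\nu_{P^D x}}{d\mu_{P^D x}}(x),
\]
yielding $\textup{KL}(\nu||\mu) = \textup{KL}(\nu^D||\mu^D) + \int_{H^D}\textup{KL}(\nu_{x^D}||\mu_{x^D})\,d\nu^D(x^D)$. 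Non-negativity of the second term gives $\textup{KL}(\nu^D||\mu^D)\le\textup{KL}(\nu||\mu)$, and applying the same identity to the natural refinement projection $H^{D+1}\to H^D$ yields monotonicity in $D$, so $L := \lim_D \textup{KL}(\nu^D||\mu^D) \in [0,\infty]$ exists and satisfies $L\le\textup{KL}(\nu||\mu)$.

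For the reverse inequality I would set up an $L^1$-martingale. Let $\mathcal{F}_D:=\sigma(P^D)$; this is an increasing family of $\sigma$-algebras, and because $(e_j)$ is an orthonormal basis of $H$ the cylindrical sets generate the Borel $\sigma$-algebra, so $\sigma\bigl(\bigcup_D \mathcal{F}_D\bigr) = \mathcal{B}(H)$. Assume first that $\nu\ll\mu$; put $g:=d\nu/d\mu$ and $g_D:=\mathbb{E}_\mu[g\,|\,\mathcal{F}_D]$. A direct verification from the defining property of conditional expectation and the pushforward under $P^D$ gives $g_D = (d\nu^D/d\mu^D)\circ P^D$, hence
\[
\textup{KL}(\nu^D||\mu^D) = \int_H g_D \log g_D\,d\mu.
\]
By Doob's martingale convergence theorem, $g_D\to g$ both $\mu$-almost surely and in $L^1(\mu)$, and continuity of $x\mapsto x\log x$ at the $\mu$-a.s. limit applies. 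Since $x\log x + 1/e \ge 0$, Fatou's lemma applied to $g_D\log g_D + 1/e$ yields $\int g\log g\,d\mu \le \liminf_D \int g_D\log g_D\,d\mu$, i.e.\ $\textup{KL}(\nu||\mu)\le L$, completing the equality in the absolutely continuous case.

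The main obstacle is the case $\nu\not\ll\mu$, where $\textup{KL}(\nu||\mu)=\infty$ and I must show $L=\infty$. My plan is to invoke the Donsker–Varadhan variational formula
\[
\textup{KL}(\nu||\mu) = \sup_{\phi\in C_b(H)}\Bigl\{\textstyle\int\phi\,d\nu - \log\int e^\phi\,d\mu\Bigr\}.
\]
For any cylindrical $\phi = \tilde\phi\circ P^D$ with $\tilde\phi\in C_b(H^D)$, change of variables gives $\int\phi\,d\nu - \log\int e^\phi\,d\mu = \int\tilde\phi\,d\nu^D - \log\int e^{\tilde\phi}\,d\mu^D \le \textup{KL}(\nu^D||\mu^D)$, so the supremum over cylindrical $\phi$ is bounded by $L$. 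A monotone-class argument, exploiting $\sigma\bigl(\bigcup_D \mathcal{F}_D\bigr) = \mathcal{B}(H)$, shows this cylindrical supremum equals the full Donsker–Varadhan supremum, hence $\textup{KL}(\nu||\mu)\le L$, forcing $L=+\infty$. The delicate point is making this density/approximation rigorous — approximating a general bounded continuous $\phi$ by cylindrical ones while simultaneously controlling $\int\phi\,d\nu$ and $\int e^\phi\,d\mu$ — which I would handle by truncation plus smooth cylindrical approximation of generating sets, together with dominated convergence once $\phi$ is bounded.
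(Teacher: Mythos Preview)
The paper does not actually prove this statement: it simply cites \cite[Corollary 9.4.6]{ambrosio2005gradient} and moves on. So there is no in-paper argument to compare against; your proposal is a genuine proof where the paper offers only a reference.

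Your outline is essentially correct and follows a standard route. The upper bound is immediate from the data-processing inequality (your chain-rule derivation is one way to see it), and the monotonicity in $D$ follows likewise. For the lower bound in the absolutely continuous case, the Doob martingale argument with $g_D=\mathbb{E}_\mu[d\nu/d\mu\mid\mathcal{F}_D]$ and Fatou applied to $g_D\log g_D+1/e$ is exactly right.

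The one place that deserves more than a sketch is the simultaneous approximation step in your Donsker--Varadhan argument, where you need cylindrical $\phi_D$ with both $\int\phi_D\,d\nu\to\int\phi\,d\nu$ and $\int e^{\phi_D}\,d\mu\to\int e^{\phi}\,d\mu$. Your ``truncation plus smooth cylindrical approximation'' plan is vague; a monotone-class statement alone does not immediately give a \emph{sequence} of cylindrical functions converging pointwise to a prescribed $\phi$. A clean fix that avoids case-splitting entirely: set $\rho=\tfrac12(\mu+\nu)$ and take $\phi_D:=\mathbb{E}_\rho[\phi\mid\mathcal{F}_D]$. Then $\phi_D$ is $\mathcal{F}_D$-measurable, $\|\phi_D\|_\infty\le\|\phi\|_\infty$, and by Doob $\phi_D\to\phi$ $\rho$-a.s., hence both $\mu$- and $\nu$-a.s.; dominated convergence then controls both integrals simultaneously. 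With this, the Donsker--Varadhan inequality yields $\textup{KL}(\nu\|\mu)\le L$ in all cases at once, and you do not need the separate martingale argument for the absolutely continuous case.
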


\subsection{Lemmas}

Before going through the proof of Theorem \ref{thm:1}, we will need two lemmas. Similar results have been proved in \cite{sun2023provable, balasubramanian2022towards, vempala2019rapid} for the finite-dimensional setting.

\begin{lemma}\label{lem:1}
Let Assumptions \ref{assumption-C-Cmu} and \ref{assumption:factorisation} hold. Consider the stochastic process defined by
\[
X_t := X_0 - t Q_0 + 2^\frac{1}{2}C^{\frac{\alpha}{2}}  W_t, \quad with \quad Q_0 := Q_0(X_0), \; X_0 \sim \nu_0,
\]
where 
\[
Q_0(X_0) = -C^{\alpha-1} S(\tau,X_0; \mu_0)  -C^\alpha \nabla_{X_0} \log (\rho(\textbf{y} -\mathcal{A}(X_0))),
\]
and $\{W_t\}_{t\geq 0}$ is a Wiener process on $H$ independent of $X_0$. 
Then, writing $\nu_t$ for the probability measure of $X_t$, we have
\[
\begin{aligned}
\frac{d}{dt}\textup{KL} (\nu_t||\mu^{\textbf{y}}) \leq & - \frac{3}{4} \int \left\| C^{\frac{\alpha}{2}}  \nabla_{X} \log \frac{d\nu_t}{d\mu^{\textbf{y}} }(X) \right\|_H^2 d\nu_t (X)\\
& + \mathbb{E}_{\nu_t} \left[\|C^{\frac{\alpha}{2}} \nabla_X \Phi_0(X) +  C^{\frac{\alpha}{2}} C_{\mu_0}^{-1}   X - C^{-\frac{\alpha}{2}}   Q_0(X_0)\|_H^2\right].  
\end{aligned}
\]
\end{lemma}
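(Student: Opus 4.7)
The plan is to derive a Fokker-Planck-type evolution for the marginal law $\nu_t$, substitute it into the time derivative of $\textup{KL}(\nu_t\|\mu^{\textbf{y}})$, perform one integration by parts to isolate a negative Fisher-information-type term plus a cross term, and finally absorb one quarter of the Fisher information into the cross term via a $C^{\pm\alpha/2}$-weighted Young's inequality. The stated error norm drops out by a direct algebraic rewriting using the fact that $\mu^{\textbf{y}}\ll\mu_0=\mathcal{N}(0,C_{\mu_0})$.

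Concretely, I first observe that since $Q_0$ is frozen at time zero, conditionally on $X_0$ the process $X_t$ is Gaussian with mean $X_0-tQ_0(X_0)$ and covariance $2tC^\alpha$. Applying It\^o's formula to a test function $\phi$ and then the tower property yields, in any finite-dimensional projection, the nonlinear Fokker-Planck equation
\begin{equation*}
\partial_t \nu_t \;=\; \nabla\!\cdot\!\bigl(\nu_t\,\mathbb{E}[Q_0(X_0)\mid X_t=\cdot]\bigr) \;+\; \operatorname{Tr}\!\bigl(C^\alpha \nabla^2 \nu_t\bigr).
\end{equation*}
Plugging this into $\tfrac{d}{dt}\textup{KL}(\nu_t\|\mu^{\textbf{y}}) = \int \log(d\nu_t/d\mu^{\textbf{y}})\,\partial_t\nu_t$, integrating by parts, and regrouping produces the dissipation term $-\int\|C^{\alpha/2}\nabla\log(d\nu_t/d\mu^{\textbf{y}})\|_H^2\,d\nu_t$ together with a cross term $\mathbb{E}\bigl[\langle \nabla\log(d\nu_t/d\mu^{\textbf{y}})(X_t),\; -\mathbb{E}[Q_0(X_0)\mid X_t] - C^\alpha\nabla\log\mu^{\textbf{y}}(X_t)\rangle\bigr]$; the tower property then collapses the inner conditional expectation and replaces it by $Q_0(X_0)$ itself on the joint law of $(X_0,X_t)$. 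Writing each inner product as $\langle C^{\alpha/2}A,\,C^{-\alpha/2}B\rangle$ and applying $ab\le \tfrac14 a^2+b^2$ absorbs exactly one quarter of the Fisher information, leaving the coefficient $-\tfrac{3}{4}$ in front and producing the squared norm $\mathbb{E}\|{-C^{-\alpha/2}Q_0(X_0)} - C^{\alpha/2}\nabla\log\mu^{\textbf{y}}(X_t)\|_H^2$. Finally, since $d\mu^{\textbf{y}}/d\mu_0\propto e^{-\Phi_0}$ and $\mu_0$ is Gaussian, the Cameron-Martin-type logarithmic gradient reads $-\nabla\log\mu^{\textbf{y}}(X)=\nabla\Phi_0(X)+C_{\mu_0}^{-1}X$, and a direct substitution rewrites the error term in the exact form stated in the lemma.

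The main obstacle is that $H$ carries no analogue of Lebesgue measure, so the density $\nu_t$ and the formal objects $\nabla\log\nu_t$, $\nabla\log\mu^{\textbf{y}}$ do not exist in isolation, only the relative density $d\nu_t/d\mu^{\textbf{y}}$ does. I would circumvent this by carrying out the Fokker-Planck derivation and the integration by parts on each finite-dimensional projection $P^D X_t$ with $D\ge D_0$, where classical Lebesgue calculus applies. Assumption~\ref{assumption:factorisation} is essential here: because $\mathcal{A}$ depends only on $P^{D_0}(X)$ and $S(\tau,\cdot;\mu_0)=-CC_\tau^{-1}(\cdot)$ is diagonal in $(e_j)$ by Proposition~\ref{prop:score-gaussian}, both the projected dynamics and the target measure factorise across $H^D\otimes H^{D+1:\infty}$, so the tail modes contribute identically on the two sides and cancel. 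Passing to the limit $D\to\infty$ via Theorem~\ref{item:KL-projection} for the left-hand side, together with an analogous factorisation argument for the Fisher integral and the error term on the right-hand side, promotes the finite-dimensional inequality to the stated infinite-dimensional bound, in which only the genuinely well-defined relative density $d\nu_t/d\mu^{\textbf{y}}$ appears.
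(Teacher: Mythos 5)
Your proposal matches the paper's proof in all essentials: project onto $H^D$ with $D>D_0$, condition on $X_0$ (where $Q_0$ is constant) and integrate over $\nu_0^D$ to get the Fokker--Planck equation with drift $\mathbb{E}[Q_0^D\mid X_t^D]$, differentiate $\textup{KL}(\nu_t^D\|(\mu^{\textbf{y}})^D)$ and integrate by parts, apply a $C^{\pm\alpha/2}$-weighted Young bound $-\langle a,b\rangle\le\tfrac14\|a\|^2+\|b\|^2$ to split off $\tfrac14$ of the Fisher term, rewrite $-\nabla\log(\mu^{\textbf{y}})^D=\nabla\Phi_0+ (C_{\mu_0}^D)^{-1}X^D$, and pass to the limit $D\to\infty$ using Assumption~\ref{assumption:factorisation}, disintegration, and Theorem~\ref{item:KL-projection}. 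The only cosmetic difference is the order in which you handle the conditional drift: you collapse $\mathbb{E}[Q_0\mid X_t]$ to $Q_0(X_0)$ on the joint law via the tower property \emph{before} applying Young, whereas the paper applies Young to the conditioned drift and then (implicitly via Jensen's inequality for $\|\cdot\|^2$) replaces $\|\mathbb{E}[Q_0\mid X_t]\|^2$ with $\mathbb{E}[\|Q_0\|^2\mid X_t]$; both orderings yield the identical final bound over the joint law of $(X_0,X_t)$.
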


\begin{proof}
We extend the proof of Lemma 1 in \citet{sun2023provable} to infinite dimensions. The main idea is to derive the evolution of the density of $\nu^D_t$, plug it into the
the time derivative formula for KL$(\nu^D_t||(\mu^\textbf{y})^D)$, and then take the limit as $D\to \infty$.

\paragraph{Step 1: Projecting the process onto a finite-dimensional subspace} Let $D>D_0$, where $D_0$ is defined in Assumption \ref{assumption:factorisation} as the number of modes contributing to the observations. Consider the stochastic process $X_t^D$ defined by
\begin{equation}
X_t^D := X^D_0 - t Q_0^D+ \sqrt{2P^DC^\alpha P^D} W_t, \quad X_0^D \sim \nu^D_0,
\label{eq:discretized-process}
\end{equation}
where
\[
\begin{aligned}
Q^D_0 :&= -(C^{\alpha-1})^DS^D(\tau,X_0; \mu_0) - (C^{\alpha})^D\nabla_{X_0^D} \log (\rho(\textbf{y} -\mathcal{A}(X_0)))\\& = -\sum_{j=1}^D \lambda_j^{\alpha-1}  S^{(j)}(\tau,X_0; \mu_0) e_j -\sum_{j=1}^D \lambda_j^\alpha \nabla_j \log (\rho(\textbf{y} -\mathcal{A}(X_0)))e_j, 
\end{aligned} 
\]
with 
\[
S^{(j)}(\tau,X_0; \mu_0):=\langle S(\tau,X_0; \mu_0),e_j\rangle, \qquad \nabla_j f:=\langle \nabla f,e_j\rangle.
\] We observe that 
\[
X_t^D= P^D (X_t).
\]
Since $X^D_t$ will stay in $H^D$ for all the times, we can view $X_t^D$ as a process on $\mathbb{R}^D$ and define the Lebesgue densities $\nu_t^D$ of $X_t^D$ there.  

\paragraph{Step 2: Deriving the evolution equation for $\nu_t^D$} For each $t>0$, let $\nu^{D}_{t,0}$ denote the Lebesgue density of the joint distribution of $(X_t^D,X^D_0)$. Let $\nu^D_{t|0}$ be the density of the conditional distribution of $X^D_t$ conditioned on $X^D_0$, and $\nu^D_{0|t}$ be the density of the probability distribution of $X^D_0$ conditioned on $X^D_t$. We have the relation 
\[
\nu_{t,0}^D(X^D,X^D_0) = \nu_{t|0}^D(X^D|X^D_0) \nu_0^D(X_0^D) = \nu_{0|t}(X_0^D|X^D)\nu_t^D(X^D).
\]
Since $S^D(\tau,X_0) = S^D(\tau,X_0^D)$ by Assumption \ref{assumption:factorisation}, conditioning on $X_0^D$ we have that $Q_0^D$ is a constant vector. Then, the conditional distribution $\nu_{t|0}^D$ evolves according to the following Fokker-Planck equation:
\[
\frac{\partial}{\partial t} \nu_{t|0}^D(X^D|X^D_0) = \text{div}_{X^D} \left( \nu_{t|0}^D(X^D|X_0^D) Q_0^D + (C^\alpha)^D \nabla_{X^D} \nu_{t|0}^D(X^D|X_0^D)\right).
\]

To derive the evolution equation for the marginal distribution $\nu_t^D(X^D)$, we need to take the expectation over $X_0^D \sim \nu_0^D$. Multiplying both sides of the Fokker-Planck equation by $\nu_0^D(X_0^D)$ and integrating over $X_0^D$, we have
\begin{equation}
\begin{aligned}
&\frac{\partial}{\partial t} \nu_t^D(X^D)  \\ & = \int \left( \frac{\partial}{\partial t} v_{t|0}^D(X^D|X_0^D)\right) \nu_0^D(X_0^D)dX_0^D\\
& = \int \text{div}_{X^D} \left( \nu_{t|0}^D(X^D|X_0^D)Q_0^D + (C^\alpha)^D \nabla_{X^D} \nu^D_{t|0} (X^D|X^D_0)\right) \nu_0^D(X_0^D) dX_0^D\\ 
& = \int \text{div}_{X^D} \left( \nu_{t,0}^D(X^D, X_0^D)Q_0^D + (C^\alpha)^D \nabla_{X^D} \nu^D_{t,0} (X^D,X^D_0)\right) dX_0^D \\
& =  \text{div}_{X^D} \! \left( \nu_t^D(X^D) \int \nu_{0|t}^D(X_0^D|X^D)Q_0^D dX_0^D + (C^\alpha)^D\nabla_{X^D} \int \nu^D_{t,0}(X^D,X_0^D)dX_0^D \right) \\
& = \text{div}_{X^D} \left( \nu_t^D(X^D) \mathbb{E}_{\nu^D_{0|t}} [Q_0^D|X_t^D=X^D] + (C^\alpha)^D\nabla_{X^D} \nu_t^D(X^D)\right).
\label{eq:evolution}
\end{aligned}
\end{equation}
\paragraph{Step 3: Calculating the derivative of the KL divergence} The time derivative of KL$(\nu^D_t||(\mu^{\textbf{y}})^D)$ is given by
\[
\begin{aligned}
&\frac{d}{dt}\text{KL} (\nu^D_t ||(\mu^{\textbf{y}})^D)  \\ & = \frac{d}{dt} \int \nu_t^D(X^D) \log \frac{\nu_t^D(X^D)}{(\mu^{\textbf{y}})^D(X^D)} dX^D \\
& = \int \frac{\partial \nu_t^D}{\partial t} \log  \frac{\nu_t^D(X^D)}{(\mu^{\textbf{y}})^D(X^D)} dX^D + \int \nu_t^D(X^D) \frac{\partial}{\partial t}\log \frac{\nu_t^D(X^D)}{(\mu^{\textbf{y}})^D(X^D)} dX^D \\
& = \int \frac{\partial \nu_t^D}{\partial t} \log  \frac{\nu_t^D(X^D)}{(\mu^{\textbf{y}})^D(X^D)} dX^D + \int \nu_t^D(X^D) \frac{(\mu^{\textbf{y}})^D(X^D)}{\nu_t^D(X^D)} \frac{1}{(\mu^{\textbf{y}})^D(X^D)} \frac{\partial \nu_t^D(X^D)}{\partial t} dX^D\\
& = \int \frac{\partial \nu_t^D}{\partial t} \log  \frac{\nu_t^D(X^D)}{(\mu^{\textbf{y}})^D(X^D)} dX^D + \frac{\partial}{\partial t}\int \nu_t^D(X^D) dX^D\\ 
& = \int \frac{\partial \nu_t^D}{\partial t} \log  \frac{\nu_t^D(X^D)}{(\mu^{\textbf{y}})^D(X^D)} dX^D . 
\end{aligned}
\]
By using the evolution equation for $\nu_t^D$ found in \eqref{eq:evolution}, we can derive
\[
\begin{aligned}
&\frac{d}{dt}\text{KL}(\nu_t^D||(\mu^{\textbf{y}})^D) \\
& = \int \frac{\partial \nu_t^D(X^D)}{\partial t} \log \frac{\nu_t^D(X^D)}{(\mu^{\textbf{y}})^D(X^D)} dX^D\\
& = \int \text{div}_{X^D} \left( \left( \nu_t^D(X^D) \mathbb{E}_{\nu^D_{0|t}} [Q_0^D|X_t^D=X^D] + (C^\alpha)^D\nabla_{X^D} \nu_t^D(X^D)\right) \right)\log \frac{\nu_t^D(X^D)}{(\mu^{\textbf{y}})^D(X^D)}  dX^D\\
& = - \int \left\langle \nu_t^D(X^D) \mathbb{E}_{\nu^D_{0|t}} [Q_0^D|X_t^D = X^D] + (C^\alpha)^D \nabla_{X^D} \nu_t^D (X^D), \nabla_{X^D}\log \frac{\nu_t^D(X^D)}{(\mu^{\textbf{y}})^D(X^D)}\right\rangle  dX^D\\
& = - \int \left\langle \nu_t^D(X^D) \left( \mathbb{E}_{\nu^D_{0|t}} [Q_0^D|X_t^D = X^D] + (C^\alpha)^D \nabla_{X^D}\log \frac{\nu_t^D(X^D)}{(\mu^{\textbf{y}})^D(X^D)} \right. \right. \\& \left. \left. \quad \qquad +(C^\alpha)^D \nabla_{X^D} \log (\mu^{\textbf{y}})^D(X^D) \right), \nabla_{X^D} \log \frac{\nu_t^D(X^D)}{(\mu^{\textbf{y}})^D(X^D)} \right\rangle dX^D \\
& = - \int \left\langle (C^\alpha)^D \nabla_{X^D} \log \frac{\nu_t^D (X^D)}{(\mu^{\textbf{y}})^D(X^D)}, \nabla_{X^D} \log \frac{\nu_t^D (X^D)}{(\mu^{\textbf{y}})^D(X^D)} \right\rangle \nu_t^D(X^D)  dX^D 
\\& \quad - \int  \left\langle  (C^\alpha)^D \nabla_{X^D} \log (\mu^{\textbf{y}})^D(X^D) +  \mathbb{E}_{\nu^D_{0|t}} [Q_0^D|X_t^D = X^D], \right.
\\& \left. \qquad \qquad \nabla_{X^D} \log \frac{\nu_t^D (X^D)}{(\mu^{\textbf{y}})^D(X^D)} \right\rangle \nu_t^D (X^D) dX^D\!.
\end{aligned}
\]
\paragraph{Step 4: Factorising $\nu_t$ and $\mu^{\textbf{y}}$ into product of marginals} As a consequence of Assumption \ref{assumption:factorisation} and the definition of $X_t$, $\nu_t$ can be factorised into two blocks $(1:D_0)$ and $(D_0+1:\infty)$. The latter can be further factorised into a product of marginals, since $S$ can be diagonalised and the likelihood does not depend on $P^{D_0+1:\infty}(X_0)$, once more by Assumption \ref{assumption:factorisation}. 
More precisely, we have 
    \[
    \nu_t(X) = \nu^{D_0}_t(X^{D_0}) \prod_{j=D_0+1}^{\infty} \nu_t^{(j)}(X^{(j)}), 
    \]
where $\nu_t^{D_0}$ is equivalent to $(\mu^\textbf{y})^{D_0}$ (they both have densities with respect to the Lebesgue measure over $\mathbb{R}^{D_0}$) and each $\nu_t^{(j)}$ is equivalent to $(\mu^\textbf{y})^{(j)}$. Then we have that
\[\frac{d \nu_t}{d\mu^{\textbf{y}}}(X) = \left( \frac{\nu_t^{D_0}}{(\mu^{\textbf{y}})^{D_0}}(X^{D_0})  \right) \prod_{j=D_0+1}^{\infty} \frac{\nu_t^{(j)}}{(\mu^{\textbf{y}})^{(j)}}(X^{(j)}). 
\]
In particular, for any $D > D_0$,
\[\frac{d\nu_t}{d\mu^{\textbf{y}}}(X) = \left( \frac{\nu_t^{D}}{(\mu^{\textbf{y}})^{D}}(X^{D})  \right) \prod_{j=D+1}^{\infty} \frac{\nu_t^{(j)}}{(\mu^{\textbf{y}})^{(j)}}(X^{(j)}),
\]
hence
\begin{equation}
\begin{aligned}
& C^\frac{\alpha}{2} \nabla_X \log \frac{d \nu_t}{d\mu^{\textbf{y}}}(X) \\ &=  (C^\frac{\alpha}{2})^D \nabla_{X^{D}}\log \left( \frac{d \nu_t^{D}}{d(\mu^{\textbf{y}})^{D}}(X^{D})  \right) 
+ (C^\frac{\alpha}{2})^{D+1:\infty}\nabla_{X^{D+1:\infty}}\log \left( \prod_{j=D+1}^{\infty} \frac{\nu_t^{(j)}}{(\mu^{\textbf{y}})^{(j)}}(X^{(j)})\right).
\label{eq:sum-full-factorization}
\end{aligned}
\end{equation}
\paragraph{Step 5: Taking the limit as $D\to\infty$} 
Assume that
\begin{equation}
\int_H \left\|C^\frac{\alpha}{2} \nabla_X \log \frac{d \nu_t}{d\mu^{\textbf{y}}}(X) \right\|_H^2 d\nu_t(X)<+\infty.
\label{eq:condition-1}
\end{equation}
By Theorem \ref{thm:disintegration}, disintegrating $
\nu_t$ with respect to $X^D$ yields
$$\begin{aligned}
 & \int_H \left\|C^\frac{\alpha}{2} \nabla_X \log \frac{d \nu_t}{d\mu^{\textbf{y}}}(X) \right\|_H^2 d\nu_t(X)  \\ & = 
\int_{H^D} \int_{H^{D+1:\infty}} 
\left\|C^\frac{\alpha}{2} \nabla_X \log \frac{d \nu_t}{d\mu^{\textbf{y}}}(X)\right\|_{H^D}^2 d (\nu_t)_{X^D} ( X^{D+1:\infty}) d\nu_t^D(X^D).
\end{aligned}
$$
We get
\begin{equation*}
\begin{aligned}
 & \int_H \left\|C^\frac{\alpha}{2} \nabla_X \log \frac{d \nu_t}{d\mu^{\textbf{y}}}(X) \right\|_H^2 d\nu_t(X)  \\ & = 
\int_{H^D} \int_{H^{D+1:\infty}} 
\left\|C^\frac{\alpha}{2} \nabla_X \log \frac{d \nu_t}{d\mu^{\textbf{y}}}(X)\right\|_{H}^2 \! \! d (\nu_t)_{X^D} ( X^{D+1:\infty}) d\nu_t^D(X^D)
\\ & = \int_{H^D} \int_{H^{D+1:\infty}} \left\| (C^\frac{\alpha}{2})^D \nabla_{X^{D}}\log \left( \frac{\nu_t^{D}}{(\mu^{\textbf{y}})^{D}}(X^{D})  \right) \right.
\\ & \qquad \qquad \left. + (C^\frac{\alpha}{2})^{D+1:\infty}\nabla_{X^{D+1:\infty}}\log \!\left( \prod_{j=D+1}^{\infty} \frac{\nu_t^{(j)}}{(\mu^{\textbf{y}})^{(j)}}(X^{(j)}) \right) \right\|^2_H \!\! \!\!  d(\nu_t)_{X^D}(X^{D+1:\infty}) d\nu_t^D(X^D)
\\&= \int_{H^D}
\left\|(C^\frac{\alpha}{2})^D \nabla_{X^{D}}\log \left( \frac{\nu_t^{D}}{(\mu^{\textbf{y}})^{D}}(X^{D})  \right)\right\|_{H^D}^2 d \nu_t^D(X^D)\\
& \qquad \qquad  + \int_{H^D} \int_{H^{D+1:\infty}}\sum_{j=D+1}^\infty \lambda^\alpha_j  \left|\nabla_j \log \left( \frac{\nu_t^{(j)}}{(\mu^{\textbf{y}})^{(j)}}(X^{(j)})\right) \right|^2  \!\! d (\nu_t)_{X^D} ( X^{D+1:\infty}) d\nu_t^D(X^D).
\end{aligned} 
\end{equation*}
By \eqref{eq:condition-1}, it follows that
$$
\int_{H^D} \int_{H^{D+1:\infty}} \sum_{j=D+1}^\infty \lambda_j^\alpha \left|\nabla_j \log \left( \frac{\nu_t^{(j)}}{(\mu^{\textbf{y}})^{(j)}}(X^{(j)})\right) \right|^2 d (\nu_t)_{X^D} ( X^{D+1:\infty}) \nu_t^D(X^D) dX^D
\stackrel{D \to +\infty}{\longrightarrow} 0.$$
This means that
\begin{equation}
\begin{aligned}
\lim_{D\to \infty} \int_{H^D} \left\| (C^\frac{\alpha}{2})^D \nabla_{X^D} \log \frac{d\nu_t^D}{d(\mu^{\textbf{y}})^D}(X^D) \right\|_H^2 d\nu^D_t(X^D) = \int_H \left\| C^{\frac{\alpha}{2}
} \nabla_X \log\frac{d\nu_t}{d\mu^{\textbf{y}}}(X)\right\|_H^2 d\nu_t(X).
\end{aligned}
\label{eq:usc}
\end{equation}
We now take the limit in
\[
\begin{aligned}
& \frac{d}{dt}\text{KL}(\nu_t^D||(\mu^{\textbf{y}})^D) \\
& = - \int \left\langle (C^\alpha)^D\nabla_{X^D} \log \frac{\nu_t^D (X^D)}{(\mu^{\textbf{y}})^D(X^D)}, \nabla_{X^D} \log \frac{\nu_t^D (X^D)}{(\mu^{\textbf{y}})^D(X^D)} \right\rangle \nu_t^D(X^D)  dX^D 
\\& \quad - \int  \left\langle (C^\alpha)^D \nabla_{X^D} \log (\mu^{\textbf{y}})^D(X^D) \!+\! \mathbb{E}_{\nu^D_{0|t}} [Q_0^D|X_t^D = X^D], \! \nabla_{X^D} \log \frac{\nu_t^D (X^D)}{(\mu^{\textbf{y}})^D(X^D)} \right\rangle \! d\nu_t^D (X^D)
\\& = - \int \left\langle (C^\alpha)^D \nabla_{X^D} \log \frac{\nu_t^D (X^D)}{(\mu^{\textbf{y}})^D(X^D)}, \nabla_{X^D} \log \frac{\nu_t^D (X^D)}{(\mu^{\textbf{y}})^D(X^D)} \right\rangle \nu_t^D(X^D)  dX^D 
\\& \quad -\int\! \! \left\langle \! \! (C^\frac{\alpha}{2})^D \nabla_{X^D} \log (\mu^{\textbf{y}})^D(X^D) +  (C^{-\frac{\alpha}{2}})^D\mathbb{E}_{\nu^D_{0|t}} [Q_0^D|X_t^D = X^D],\right. \\ & \quad \quad \quad \left. 
(C^\frac{\alpha}{2})^D \nabla_{X^D} \log \frac{\nu_t^D (X^D)}{(\mu^{\textbf{y}})^D(X^D)} \right\rangle \! d\nu_t^D (X^D) dX^D\!.
\end{aligned}
\]
By Theorem \ref{item:KL-projection}, we get
\[
\frac{d}{dt}\text{KL}(\nu_t^D||(\mu^{\textbf{y}})^D)\to \frac{d}{dt}\text{KL}(\nu_t||\mu^{\textbf{y}}) 
\]
for $D\to \infty$. By Eq. \eqref{eq:usc}, we have
\[
\begin{aligned}
& \lim_{D\to \infty} - \int \left\langle (C^\alpha)^D \nabla_{X^D} \log \frac{\nu_t^D (X^D)}{(\mu^{\textbf{y}})^D(X^D)}, \nabla_{X^D} \log \frac{\nu_t^D (X^D)}{(\mu^{\textbf{y}})^D(X^D)} \right\rangle \nu_t^D(X^D)  dX^D \\ & = - \int \left\| C^\frac{\alpha}{2}\nabla_{X} \log \frac{d\nu_t}{d\mu^{\textbf{y}} } (X)\right\|_H^2 d\nu_t (X).
\end{aligned}
\]
We apply Young's inequality
\[
\begin{aligned}
& -\int  \left\langle (C^\frac{\alpha}{2})^D \nabla_{X^D} \log (\mu^{\textbf{y}})^D(X^D) +(C^{-\frac{\alpha}{2}})^D \mathbb{E}_{\nu^D_{0|t}} [Q_0^D|X_t^D = X^D], \right. \\ & \quad \quad \quad \left. (C^\frac{\alpha}{2})^D \nabla_{X^D} \log \frac{\nu_t^D (X^D)}{(\mu^{\textbf{y}})^D(X^D)} \right\rangle \nu_t^D (X^D) dX^D \\
& \leq \frac{1}{4}\int \left\langle  (C^\alpha)^D \nabla_{X^D} \log \frac{\nu_t^D (X^D)}{(\mu^{\textbf{y}})^D(X^D)}, \nabla_{X^D} \log \frac{\nu_t^D (X^D)}{(\mu^{\textbf{y}})^D(X^D)} \right\rangle \nu_t^D(X^D)  dX^D \\
& \quad + \int \left\| -(C^\frac{\alpha}{2})^D \nabla_{X^D} \log (\mu^{\textbf{y}})^D(X^D) - (C^{-\frac{\alpha}{2}})^D  Q_0^D\right\|_H^2 \nu_t^D(X^D)  dX^D.
\end{aligned}
\]
We need to calculate
\[
\lim_{D\to \infty} \left\{ \int \left\| -(C^\frac{\alpha}{2})^D \nabla_{X^D} \log (\mu^{\textbf{y}})^D(X^D) -  (C^{-\frac{\alpha}{2}})^DQ_0^D \right\|_H^2 \nu_t^D(X^D)  dX^D\right\} ,
\]
as so far we have proved that
\[
\begin{aligned}
& \frac{d}{dt}\text{KL}(\nu_t||\mu^{\textbf{y}}) \\ \leq & - \frac{3}{4} \int \left\| C^\frac{\alpha}{2} \nabla_{X} \log \frac{d\nu_t (X )}{d\mu^{\textbf{y}}(X)} \right\|_H^2 d\nu_t (X) \\
& + \lim_{D\to \infty} \left\{ \int \left\|- (C^\frac{\alpha}{2})^D \nabla_{X^D} \log (\mu^{\textbf{y}})^D(X^D) -  (C^{-\frac{\alpha}{2}})^D Q_0^D \right\|_H^2 \! \nu_t^D(X^D)  dX^D\right\}\!.
\end{aligned}
\]
Recall that
\[
Q^D_0 =  - (C^{\alpha-1})^D S^D(\tau,X^D_0; \mu_0) -\sum_{j=1}^D \lambda_j^\alpha \nabla_j \log (\rho(\textbf{y} -\mathcal{A}(X_0)))e_j,
\]
where $S^D(\tau,X_0; \mu_0) = S^D(\tau,X_0^D; \mu_0)$ follows from the separability assumptions on $\nu_0$. We have
\[
\begin{aligned}
& \left\| -(C^\frac{\alpha}{2})^D \nabla_{X^D} \log (\mu^{\textbf{y}})^D(X^D) -  (C^{-\frac{\alpha}{2}})^DQ_0^D \right\|_H^2  \\
& = \! \left\| -(C^\frac{\alpha}{2})^D \nabla_{X^D} \! \log (\mu^{\textbf{y}})^D(X^D) \!+ \!(C^{\frac{\alpha}{2}-1})^D S^D(\tau,X_0^D; \mu_0)  \! +  \!\sum_{j=1}^D\! \lambda_j^{\frac{\alpha}{2}}\nabla_j \! \log (\rho(\textbf{y} \!-\!\mathcal{A}(X_0)))e_j\right\|_H^2\!\!\!\!.
\end{aligned}
\]
We would like to prove that
\[
\begin{aligned}
& \lim_{D\to \infty} \left\{ \int \left\| -(C^\frac{\alpha}{2})^D \nabla_{X^D} \log (\mu^{\textbf{y}})^D(X^D) + (C^{\frac{\alpha}{2}-1})^D S^D(\tau,X_0^D; \mu_0) \right. \right. \\ & \quad \quad \qquad 
\left. \left. +  \sum_{j=1}^D \lambda_j^{\frac{\alpha}{2}} \nabla_j \log (\rho(\textbf{y} -\mathcal{A}(X_0)))e_j\right\|_H^2 \nu_t^D(X^D)dX^D \right\} \\
& \leq \int \left\| -C^{\frac{\alpha}{2}} \nabla_{X} \Phi_0(X) - C^{\frac{\alpha}{2}} C_{\mu_0}^{-1} X + C^{\frac{\alpha}{2}-1} S(\tau,X_0; \mu_0)  \right. \\ & \qquad \left. +  C^{\frac{\alpha}{2}} \nabla_{X_0} \log (\rho(\textbf{y} - \mathcal{A}(X_0)))\right\|_H^2 d\nu_t (X ).
\end{aligned}
\]
First, notice that
\[
\begin{aligned}
& (C^\frac{\alpha}{2})^D \nabla_{X^D} \log (\mu^{\textbf{y}})^D(X^D) \\& = (C^\frac{\alpha}{2})^D \nabla_{X^D} \log \frac{(\mu^{\textbf{y}})^D(X^D)}{\mathcal{N}(0,C_{\mu_0}^D)(X^D)} + (C^\frac{\alpha}{2})^D \nabla_{X^D} \log \mathcal{N}(0,C_{\mu_0}^D) (X^D).
\end{aligned} 
\]
Then, since\[
(C^\frac{\alpha}{2})^D\nabla_{X^D} \log \mathcal{N}(0,C_{\mu_0}^D)(X^D) = -(C^\frac{\alpha}{2})^D(C_{\mu_0}^D)^{-1} X^D,
\]
we get
\[
\begin{aligned}
& \left\| -(C^\frac{\alpha}{2})^D\nabla_{X^D} \log (\mu^{\textbf{y}})^D(X^D) -  (C^{-\frac{\alpha}{2}})^D Q_0^D \right\|_H^2  \\
& = \left\| - (C^\frac{\alpha}{2})^D \nabla_{X^D} \log \frac{(\mu^{\textbf{y}})^D(X^D)}{\mathcal{N}(0,C_{\mu_0}^D)(X^D)}  +(C^\frac{\alpha}{2})^D(C_{\mu_0}^D)^{-1} X^D + (C^{\frac{\alpha}{2}-1})^D S^D(\tau,X_0^D; \mu_0) \right.\\
& \qquad \left.  +  \sum_{j=1}^D \lambda_j^\frac{\alpha}{2} \nabla_j \log (\rho(\textbf{y} -\mathcal{A}(X_0)))e_j \right\|_H^2.
\end{aligned}
\]
By Assumption \ref{assumption:factorisation} and the fact that the likelihood does not depend on $P^{D+1:\infty}(X_0)$ for any $D>D_0$, and since $d\mathcal{N}(0,C_{\mu_0}) = d\mathcal{N}(0,C_{\mu_0}^D) d\mathcal{N}(0,C_{\mu_0}^{D+1:\infty})$ (see \cite[Definition 1.5.2]{da2006introduction}), we can follow the same procedure that led to \eqref{eq:usc} and prove that 
\[
\begin{aligned}
& \lim_{D\to \infty} \left\{\int_{H^D} \left\| (C^\frac{\alpha}{2})^D \nabla_{X^D} \log (\mu^{\textbf{y}})^D(X^D) -  (C^{-\frac{\alpha}{2}})^D Q_0^D \right\|_H^2 \nu_t^D (X^D) dX^D \right\} \\
& = \int_H \| C^\frac{\alpha}{2} \nabla_{X} \Phi_0(X)  +  C^\frac{\alpha}{2} C_{\mu_0}^{-1}X + C^{\frac{\alpha}{2}-1} S (\tau,X_0; \mu_0 )\\ & \qquad + C^\frac{\alpha}{2} \nabla_{X_0}\log(\rho(\textbf{y}-\mathcal{A}(X_0))) \|^2_H d \nu_t(X),
\end{aligned}
\]
where we used  $d\mu^\textbf{y} \propto \text{exp}(-\Phi_0)d\mathcal{N}(0,C_{\mu_0})$  as per \eqref{def:posterior}. Putting everything together, we have
\[
\begin{aligned}
\frac{d}{dt}\text{KL}(\nu_t||\mu^{\textbf{y}}) \leq & - \frac{3}{4} \int \left\| C^\frac{\alpha}{2}\nabla_{X} \log \frac{d\nu_t}{d\mu^{\textbf{y}}}(X) \right\|_H^2 d\nu_t (X)\\&  + \mathbb{E}_{\nu_t} \left[\|C^\frac{\alpha}{2} \nabla_X \Phi_0(X) + C^\frac{\alpha}{2}C_{\mu_0}^{-1} X - C^{-\frac{\alpha}{2}} Q_0(X_0) \|^2_H\right] , 
\end{aligned}
\]
which ends the proof of the lemma.
\end{proof}

\begin{lemma}\label{lem:2}
Define $\mathcal{G}: H \to H$ as
\[
\mathcal{G}(X):= -C^{\alpha-1} S_\theta (\tau,X; \mu_0)  - C^\alpha \nabla_X \log(\rho(\textbf{y}-\mathcal{A}(X))), 
\]
where $S_\theta$ represents a neural network approximating the score defined in \eqref{def:score}. Let Assumptions \ref{assumption-C-Cmu}, \ref{assumption:phi}, and \ref{assumption:factorisation} hold. It holds that
\[
\begin{aligned}
\mathbb{E}_{\nu_t} [\| C^{-\frac{\alpha}{2}}\mathcal{G}(X)\|_H^2] \leq & 2\int \left\| C^\frac{\alpha}{2}\nabla_{X} \log \frac{d\nu_t}{d\mu^{\textbf{y}}}(X) \right\|_H^2 d\nu_t (X) + 4\textup{Tr}(C^\alpha) L_{\Phi_0} \\ & + 2 \mathbb{E}_{\nu_t} [\|C^{\frac{\alpha}{2}}\nabla \Phi_0(X)+ C^{\frac{\alpha}{2}} C_{\mu_0}^{-1}X-C^{-\frac{\alpha}{2}}\mathcal{G}(X)\|_H^2]. 
\end{aligned}
\]
\end{lemma}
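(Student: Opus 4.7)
The plan is to reduce the desired bound to a Stein-type inequality relating the weighted expected $\mu^{\textbf{y}}$-score to the weighted relative Fisher information of $\nu_t$ against $\mu^{\textbf{y}}$, via a triangle-inequality split followed by an integration-by-parts argument carried out on the finite-dimensional projections $\nu_t^D,(\mu^{\textbf{y}})^D$ introduced in the proof of Lemma~\ref{lem:1}. This mirrors the strategy used by Sun et al.\ \cite{sun2023provable} in finite dimensions, but the passage to the infinite-dimensional limit must be controlled in the spirit of Step~5 of Lemma~\ref{lem:1}.

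First, I would write
\[
C^{-\frac{\alpha}{2}}\mathcal{G}(X) = \bigl[C^{-\frac{\alpha}{2}}\mathcal{G}(X) - C^{\frac{\alpha}{2}}\nabla_X\Phi_0(X) - C^{\frac{\alpha}{2}}C_{\mu_0}^{-1}X\bigr] + \bigl[C^{\frac{\alpha}{2}}\nabla_X\Phi_0(X) + C^{\frac{\alpha}{2}}C_{\mu_0}^{-1}X\bigr]
\]
and apply $\|a+b\|_H^2\le 2\|a\|_H^2+2\|b\|_H^2$. Taking $\mathbb{E}_{\nu_t}$ immediately produces the term $2\,\mathbb{E}_{\nu_t}\|C^{\alpha/2}\nabla\Phi_0+C^{\alpha/2}C_{\mu_0}^{-1}X-C^{-\alpha/2}\mathcal{G}\|_H^2$ appearing on the right-hand side of the lemma, and reduces the remaining task to proving
\[
\mathbb{E}_{\nu_t}\bigl\|C^{\frac{\alpha}{2}}\nabla\Phi_0(X)+C^{\frac{\alpha}{2}}C_{\mu_0}^{-1}X\bigr\|_H^2 \le \mathbb{E}_{\nu_t}\Bigl\|C^{\frac{\alpha}{2}}\nabla\log\tfrac{d\nu_t}{d\mu^{\textbf{y}}}(X)\Bigr\|_H^2 + 2\,L_{\Phi_0}\textup{Tr}(C^\alpha).
\]
In any finite-dimensional projection the bracketed quantity is exactly $-C^{\alpha/2}\nabla\log\mu^{\textbf{y}}$, so this is a Stein-type control of the weighted posterior score.

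Next, passing to finite-dimensional projections with $D>D_0$ so that Lebesgue densities exist, I would perform integration by parts. Writing $\nu_t=(d\nu_t/d\mu^{\textbf{y}})\mu^{\textbf{y}}$ and using $\mu^{\textbf{y}}\nabla\log\mu^{\textbf{y}}=\nabla\mu^{\textbf{y}}$ gives, for any smooth vector field $v$,
\[
\mathbb{E}_{\nu_t}\langle v(X),\nabla\log\mu^{\textbf{y}}(X)\rangle = -\mathbb{E}_{\nu_t}\bigl\langle v(X),\nabla\log\tfrac{d\nu_t}{d\mu^{\textbf{y}}}(X)\bigr\rangle - \mathbb{E}_{\nu_t}\textup{div}\,v(X).
\]
Specialising to $v=C^\alpha\nabla\log\mu^{\textbf{y}}$, applying Young's inequality $|\langle a,b\rangle|\le\tfrac12\|a\|^2+\tfrac12\|b\|^2$ to the cross term, and absorbing $\tfrac12\mathbb{E}_{\nu_t}\|C^{\alpha/2}\nabla\log\mu^{\textbf{y}}\|_H^2$ on the left yields
\[
\mathbb{E}_{\nu_t}\bigl\|C^{\frac{\alpha}{2}}\nabla\log\mu^{\textbf{y}}\bigr\|_H^2 \le \mathbb{E}_{\nu_t}\Bigl\|C^{\frac{\alpha}{2}}\nabla\log\tfrac{d\nu_t}{d\mu^{\textbf{y}}}\Bigr\|_H^2 - 2\,\mathbb{E}_{\nu_t}\textup{Tr}\bigl(C^\alpha\nabla^2\log\mu^{\textbf{y}}\bigr).
\]
Since $\nabla^2\log\mu^{\textbf{y}}=-\nabla^2\Phi_0-C_{\mu_0}^{-1}$ on the projected subspace and Assumption~\ref{assumption:phi} gives $\|\nabla^2\Phi_0\|_{\textup{op}}\le L_{\Phi_0}$, one has $|\textup{Tr}(C^\alpha\nabla^2\Phi_0)|\le L_{\Phi_0}\textup{Tr}(C^\alpha)$, which after the factor of $2$ from Step~1 produces the advertised $4\,L_{\Phi_0}\textup{Tr}(C^\alpha)$ constant. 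The auxiliary contribution $\textup{Tr}(C^\alpha C_{\mu_0}^{-1})=\sum_j\lambda_j^{\alpha-1}p_0^{(j)}$ stays finite under Assumption~\ref{assumption-C-Cmu} for $\alpha$ large enough, as the ratios $p_0^{(j)}$ are bounded.

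Finally I would pass to the limit $D\to\infty$ following Step~5 of the proof of Lemma~\ref{lem:1}, using the disintegration theorem (Theorem~\ref{thm:disintegration}) and the projection identity (Theorem~\ref{item:KL-projection}) to recover the infinite-dimensional statement. The main obstacle is precisely this limiting procedure applied to the IBP identity: one must show that the cross term handled by Young's inequality does not introduce $D$-dependent constants, and that the Hessian trace remains dimension-free as $D\to\infty$, while correctly bookkeeping the $\textup{Tr}(C^\alpha C_{\mu_0}^{-1})$ piece produced by the Gaussian-prior part of $\nabla^2\log\mu^{\textbf{y}}$ and absorbing it, via the definition of $L$ in Theorem~\ref{thm:1}, without disrupting the dimension-free nature of the final bound.
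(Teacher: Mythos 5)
Your proposal follows the paper's proof essentially step for step: the same triangle-inequality split $C^{-\alpha/2}\mathcal{G} = [C^{-\alpha/2}\mathcal{G} - C^{\alpha/2}\nabla\Phi_0 - C^{\alpha/2}C_{\mu_0}^{-1}X] + [C^{\alpha/2}\nabla\Phi_0 + C^{\alpha/2}C_{\mu_0}^{-1}X]$ with Young's inequality, the same identification of the second bracket as $-C^{\alpha/2}\nabla\log(\mu^{\textbf{y}})^D$ on finite-dimensional projections, and the same $D\to\infty$ limiting procedure. Your Stein identity $\mathbb{E}_{\nu_t}\langle v,\nabla\log\mu^{\textbf{y}}\rangle = -\mathbb{E}_{\nu_t}\langle v,\nabla\log(d\nu_t/d\mu^{\textbf{y}})\rangle - \mathbb{E}_{\nu_t}\mathrm{div}\,v$ followed by specializing $v=C^\alpha\nabla\log\mu^{\textbf{y}}$, Young, and absorbing the half-square is algebraically the same maneuver as the paper's add-and-subtract-$\nabla\log\nu_t^D$ expansion followed by dropping $-\|(C^{\alpha/2})^D\nabla\log\nu_t^D\|^2 \le 0$: both yield identically $\mathbb{E}_{\nu_t}\|C^{\alpha/2}\nabla\log\mu^{\textbf{y}}\|_H^2 \le \mathbb{E}_{\nu_t}\|C^{\alpha/2}\nabla\log\tfrac{d\nu_t}{d\mu^{\textbf{y}}}\|_H^2 - 2\,\mathbb{E}_{\nu_t}[\mathrm{div}(C^\alpha\nabla\log\mu^{\textbf{y}})]$. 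So there is no genuine difference in route.

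However, your proposal surfaces, and then fails to close, a real issue in the final divergence bound. With the correct Gaussian score $\nabla_{X^D}\log\mu_0^D(X^D) = -(C_{\mu_0}^D)^{-1}X^D$ (the sign the paper itself uses in the proof of Lemma~\ref{lem:1}), one gets $-\mathrm{div}((C^\alpha)^D\nabla\log(\mu^{\textbf{y}})^D) = \mathrm{div}((C^\alpha)^D\nabla\tilde\Phi_0) + \mathrm{Tr}((C^\alpha)^D(C_{\mu_0}^D)^{-1})$, so the Gaussian contribution enters with a \emph{plus} sign and cannot be discarded. You correctly identify this $\mathrm{Tr}(C^\alpha C_{\mu_0}^{-1}) = \sum_j \lambda_j^{\alpha-1}p_0^{(j)}$ term, but the stated Lemma~\ref{lem:2} has only $4\,\mathrm{Tr}(C^\alpha)L_{\Phi_0}$ and no such contribution, and your suggestion to "absorb it via the definition of $L$ in Theorem~\ref{thm:1}" is not spelled out and cannot work as stated: $L$ in Theorem~\ref{thm:1} contains no $\mathrm{Tr}(C^\alpha C_{\mu_0}^{-1})$ piece, and absorbing an additive constant into a Lipschitz prefactor in a bound of fixed shape is not a valid manipulation. (The paper's own proof obtains the advertised constant by writing $\nabla_{X^D}\log\mu_0^D = +(C_{\mu_0}^D)^{-1}X^D$ at this step, which makes the extra trace term appear with a minus sign and droppable, but this contradicts the sign used in Lemma~\ref{lem:1}.) As written, your argument proves an inequality with an additional additive constant $4\,\mathrm{Tr}(C^\alpha C_{\mu_0}^{-1})$, not the lemma as stated, and your last paragraph acknowledges rather than resolves this gap.
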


\begin{proof} First, notice that 
\[
\begin{aligned}
\mathcal{G}(X) = \mathcal{G}(X) & -  (C^\alpha \nabla_X \Phi_0(X)+ C^\alpha C_{\mu_0}^{-1} X)  + C^\alpha \nabla_X \Phi_0(X) +  C^\alpha C_{\mu_0}^{-1}X.
\end{aligned}
\] 
We apply Young's inequality to get
\[
\begin{aligned}
& \mathbb{E}_{\nu_t}[\| C^{-
\frac{\alpha}{2}} \mathcal{G}(X) \|_H^2] \\ 
& \leq 2 \mathbb{E}_{\nu_t} [\| C^{
\frac{\alpha}{2}}\nabla_X \Phi_0(X) + C^{
\frac{\alpha}{2}} C_{\mu_0}^{-1}X\|_H^2] \\ & \quad + 2 \mathbb{E}_{\nu_t} [\| C^{
\frac{\alpha}{2}}\nabla_X\Phi_0(X) + C^{
\frac{\alpha}{2}}C_{\mu_0}^{-1}X - C^{-
\frac{\alpha}{2}}\mathcal{G}(X)\|_H^2].
\end{aligned}
\]
We study the first term of the inequality above. Notice that
\[
\begin{aligned}
& \mathbb{E}_{\nu_t} [\|  C^{
\frac{\alpha}{2}}\nabla_X \Phi_0(X) +  C^{
\frac{\alpha}{2}}C_{\mu_0}^{-1}X\|_H^2] = \mathbb{E}_{\nu_t }\left[\left\| - C^{
\frac{\alpha}{2}} \nabla_X \log \left(\frac{d\mu^{\textbf{y}}}{d\mu_0}\right)(X) +  C^{
\frac{\alpha}{2}}C_{\mu_0}^{-1}X \right\|_H^2 \right], \\
\end{aligned}
\]
where we used the relation 
\[
\nabla_X \Phi_0(X) = - \nabla_X \log\left(\frac{d\mu^{\textbf{y}}}{d\mu_0}\right)(X). 
\]
With the same arguments as in the proof of the previous lemma, we can write
\[
\begin{aligned}
& \mathbb{E}_{\nu_t} \left[\left\| - C^{
\frac{\alpha}{2}} \nabla_X \log \left(\frac{d\mu^{\textbf{y}}}{d\mu_0}\right)(X) +  C^{
\frac{\alpha}{2}}C_{\mu_0}^{-1}X \right\|_H^2 \right] \\ 
& = \lim_{D\to \infty} \mathbb{E}_{\nu^D_t} \left[\left\| -(C^\frac{\alpha}{2})^D \nabla_{X^D} \log \frac{(\mu^{\textbf{y}})^D(X^D)}{\mu_0^D(X^D)}  + (C^{
\frac{\alpha}{2}})^D (C^D_{\mu_0})^{-1} X^D \right\|^2_{H^D} \right]\\
& = \lim_{D\to \infty} \mathbb{E}_{\nu^D_t} \left[\left\| -(C^{
\frac{\alpha}{2}})^D\nabla_{X^D} \log  (\mu^{\textbf{y}})^D(X^D) + (C^{
\frac{\alpha}{2}})^D \nabla_{X^D}\log \mu_0^D(X^D)\right. \right. \\ & \qquad \qquad \qquad \left. \left. + (C^{
\frac{\alpha}{2}})^D (C^D_{\mu_0})^{-1}X^D \right\|^2_{H^D} \right]
\end{aligned}
\]
Since $\mu_0 = \mathcal{N}(0,C_{\mu_0})$, we have
\[
(C^{
\frac{\alpha}{2}})^D\nabla_{X^D}\log \mu_0^D(X^D) = -(C^{
\frac{\alpha}{2}})^D(C^D_{\mu_0})^{-1} X^D .
\]
It follows that
\[
\begin{aligned}
& \mathbb{E}_{\nu_t} \left[\left\| - C^{
\frac{\alpha}{2}} \nabla_X \log \left(\frac{d\mu^{\textbf{y}}}{d\mu_0}\right)(X) +  C^{
\frac{\alpha}{2}}C_{\mu_0}^{-1}X \right\|_H^2 \right] \\ & = \lim_{D\to \infty} \mathbb{E}_{\nu^D} \left[\left\| -(C^\frac{\alpha}{2})^D \nabla_{X^D} \log  (\mu^{\textbf{y}})^D(X^D)\right\|^2_{H^D} \right].
\end{aligned}
\]
We can derive
\[
\begin{aligned}
& \mathbb{E}_{\nu^D_t} \left[\left\| -(C^\frac{\alpha}{2})^D \nabla_{X^D} \log  ((\mu^{\textbf{y}})^D)(X^D)\right\|^2_{H^D}
\right]  \\
& =  \mathbb{E}_{\nu^D_t} \left[\left\| -(C^\frac{\alpha}{2})^D \nabla_{X^D} \log  (\mu^{\textbf{y}})^D(X^D) + (C^\frac{\alpha}{2})^D \nabla \log \nu_t^D(X^D) \right. \right. \\  & \qquad \qquad \left. \left. - (C^\frac{\alpha}{2})^D\nabla \log \nu_t^D(X^D) \right\|^2_{H^D} 
\right]\\
& =  \mathbb{E}_{\nu^D_t} \left[\left\| -(C^\frac{\alpha}{2})^D \nabla_{X^D} \log  (\mu^{\textbf{y}})^D(X^D) + (C^\frac{\alpha}{2})^D \nabla \log \nu_t^D(X^D)  \right\|^2_{H^D} \right. \\  & \qquad \qquad  \left. + 2 \langle (C^\frac{\alpha}{2})^D\nabla_{X^D} \log (\mu^{\textbf{y}})^D(X^D)  - (C^\frac{\alpha}{2})^D\nabla_{X^D} \log \nu_t^D(X^D),\right. \\  & \qquad \qquad  \left.(C^\frac{\alpha}{2})^D\nabla_{X^D} \log \nu_t^D(X^D) \rangle + \|(C^\frac{\alpha}{2})^D \nabla_{X^D} \log \nu_t^D(X^D) \|^2_{H^{D}} \right]\\
& = \mathbb{E}_{\nu_t^D} \left[\left\| -(C^\frac{\alpha}{2})^D \nabla_{X^D} \log  (\mu^{\textbf{y}})^D(X^D) + (C^\frac{\alpha}{2})^D\nabla \log \nu_t^D(X^D)  \right\|^2_{H^D} \right.\\
& \qquad \left. \quad + \langle 2(C^\frac{\alpha}{2})^D\nabla_{X^D} \log (\mu^{\textbf{y}})^D(X^D) - (C^\frac{\alpha}{2})^D \nabla_{X^D} \log \nu_t^D(X^D), \right. \\  & \qquad \qquad  \left.(C^\frac{\alpha}{2})^D\nabla_{X^D} \log \nu_t^D(X^D) \rangle\right]\\
& \leq \int_{H^D} \left\| (C^\frac{\alpha}{2})^D \nabla_{X^D} \log \frac{\nu_t^D(X^D)}{(\mu^{\textbf{y}})^D(X^D)}\right\|^2_{H^D} \nu_t^D(X^D) dX^D \\ 
& \qquad + 2 \mathbb{E}_{\nu_t^D} [\langle (C^\frac{\alpha}{2})^D \nabla_{X^D} \log (\mu^{\textbf{y}})^D(X^D), (C^\frac{\alpha}{2})^D \nabla_{X^D} \log \nu_t^D(X^D) \rangle],
\end{aligned}
\]
where in the last inequality we used the fact that $$-\mathbb{E}_{\nu^D} [\|(C^\frac{\alpha}{2})^D \nabla_{X^D}\log \nu_t^D(X^D)\|_H^2] \leq 0.$$ 
For Lemma \ref{lem:1}, we proved
\[
\begin{aligned}
& \lim_{D\to \infty} \int_{H^D} \left\| (C^\frac{\alpha}{2})^D \nabla_{X^D} \log \frac{\nu_t^D(X^D)}{(\mu^{\textbf{y}})^D(X^D)}\right\|^2_{H^D} \nu_t^D(X^D) dX^D
\\ & = \int_H \left\| C^\frac{\alpha}{2}\nabla_{X} \log \frac{d\nu_t}{d\mu^{\textbf{y}}}(X) \right\|_H^2 d\nu_t (X).
\end{aligned} 
\]
We notice that
\[
\begin{aligned}
& \mathbb{E}_{\nu_t^D} [\langle (C^\frac{\alpha}{2})^D \nabla_{X^D} \log (\mu^{\textbf{y}})^D(X^D), (C^\frac{\alpha}{2})^D\nabla_{X^D} \log \nu_t^D(X^D) \rangle] \\
& = \int_{H^D} \langle (C^\frac{\alpha}{2})^D \nabla_{X^D} \log (\mu^{\textbf{y}})^D(X^D), (C^\frac{\alpha}{2})^D\nabla_{X^D} \log \nu_t^D(X^D) \rangle \nu_t^D(X^D) dX^D\\
& = \int_{H^D} \langle (C^\frac{\alpha}{2})^D \nabla_{X^D} \log (\mu^{\textbf{y}})^D(X^D), (C^\frac{\alpha}{2})^D\nabla_{X^D} \nu_t^D(X^D) \rangle  dX^D\\
& = \int_{H^D} \langle (C^\alpha)^D\nabla_{X^D} \log (\mu^{\textbf{y}})^D(X^D), \nabla_{X^D} \nu_t^D(X^D) \rangle  dX^D\\
& = - \int_{H^D} \text{div}_{X^D}((C^\alpha)^D\nabla_{X^D} \log (\mu^{\textbf{y}})^D(X^D)) \nu_t^D(X^D) dX^D \leq \text{Tr}(C^\alpha) L_{\Phi_0} .
\end{aligned}
\]
In the second identity, we used $\nu_t^D(X^D) \nabla_{X^D} \log \nu_t^D(X^D) = \nabla_{X^D} \nu_t^D(X^D).$ In the fourth identity, we used 
\[
\begin{aligned}
& \text{div}_{X^D}((C^\alpha)^D\nabla_{X^D} \log (\mu^{\textbf{y}})^D(X^D) \nu_t^D(X^D)) \\
& = \text{div}_{X^D}((C^\alpha)^D \nabla_{X^D} \log (\mu^{\textbf{y}})^D(X^D)) \nu_t^D(X^D) + \langle (C^\alpha)^D \nabla_{X^D} \log (\mu^{\textbf{y}})^D(X^D),\nabla_{X^D}\nu_t^D(X^D)\rangle\end{aligned} 
\]
and 
\[
\int_{H^D} \text{div}_{X^D}\left((C^\alpha)^D\nabla_{X^D} \log (\mu^{\textbf{y}})^D(X^D)\nu_t^D(X^D)\right) dX^D =0. 
\]
In the last inequality, we used 
\[
\begin{aligned}
\nabla_{X^D} \log (\mu^{\textbf{y}})^D (X^D) & = \nabla_{X^D} \log \frac{(\mu^{\textbf{y}})^D (X^D)}{\mu_0^D(X^D)} + \nabla_{X^D} \log \mu_0^D (X^D) \\ & = - \nabla_{X^D} \tilde{\Phi}_0(X^D)   + (C^D_{\mu_0})^{-1}X^D,
\end{aligned}
\]
with
\[
\tilde{\Phi}_0(X^D) := \int_{H^{D+1:\infty}} \Phi_0(X^D, X^{D+1:\infty}) d(\mu^{\textbf{y}})_{X^D}(X^{D+1:\infty}).
\]
Then 
\[
\begin{aligned}
& - \int_{H^D}\text{div}_{X^D} ((C^\alpha)^D \nabla_{X^D} \log (\mu^{\textbf{y}})^D)(X^D) \nu_t^D(X^D) dX^D \\
& =  \int_{H^D}\text{div}_{X^D} ((C^\alpha)^D \nabla_{X^D} \tilde{\Phi}_0)(X^D) \nu_t^D(X^D) dX^D - \sum_{j=1}^D\left(\frac{\lambda_j^\alpha}{\mu_{0j}}\right)\int_{H^D}\nu_t^D(X^D) dX^D\\
& \leq  \text{Tr}(C^\alpha)  L_{\Phi_0} \int_{H^D}\nu_t^D(X^D) dX^D - \sum_{j=1}^D\left(\frac{\lambda_j^\alpha}{\mu_{0j}}\right)\\
& \leq \text{Tr}(C^\alpha)  L_{\Phi_0} .
\end{aligned}
\]
This is because, since we assumed that $\nabla \Phi_0 \in C^1$ is Lipschitz continuous with a constant $L_{\Phi_0}$, so is $\nabla_{X^D} \tilde{\Phi}_0(X^D)$,  and then 
\[
|\text{div}_{X^D} ((C^\alpha)^D \nabla_{X^D} \tilde{\Phi}_0 (X^D))| \leq \text{Tr}(C^\alpha) L_{\Phi_0}.
\] 
Combining all the inequalities above, we get
\[
\begin{aligned}
\mathbb{E}_{\nu_t} [\| C^{-\frac{\alpha}{2}}\mathcal{G}(X)\|_H^2] \leq & 2\int_H \left\| C^\frac{\alpha}{2}\nabla_{X} \log \frac{d\nu_t}{d\mu^{\textbf{y}}}(X) \right\|_H^2 d\nu_t (X) + 4\textup{Tr}(C^\alpha) L_{\Phi_0}\\ &   + 2 \mathbb{E}_{\nu_t} [\|C^{\frac{\alpha}{2}}\nabla \Phi_0(X) + C^{\frac{\alpha}{2}}C_{\mu_0}^{-1}X-C^{-\frac{\alpha}{2}}\mathcal{G}(X)\|_H^2]. 
\end{aligned}
\]
The proof is complete.
\end{proof}

\subsection{Proof of Theorem \ref{thm:1}}
We are now ready to prove our convergence theorem.

\begin{proof}[Proof of Theorem \ref{thm:1}] We construct the following interpolation for our method
\[
\begin{aligned}
& X_t = X_{k\gamma} - (t-k\gamma) \left( - C^{\alpha-1} S_\theta (\tau,X_{k\gamma}; \mu_0) - C^\alpha \nabla_{X_{k\gamma}} \log (\rho(\textbf{y}-\mathcal{A}(X_{k\gamma})))\right) \\ & \quad \quad \; + 2^\frac{1}{2}C^\frac{\alpha}{2}(W_t-W_{k\gamma}), 
\\[0.25em]
& \textup{for }t \in [k\gamma, (k + 1)\gamma].
\end{aligned}
\]
Let $\nu_t$ be the law of $X_t$. As in Lemma \ref{lem:2}, define
\[
\mathcal{G}(X,\tau) := -C^{\alpha-1}S_{\theta} (\tau, X; \mu_0) - C^\alpha \nabla_X \log(\rho(\textbf{y} - \mathcal{A}(X))). 
\]
As a consequence of Corollary \ref{prop:score}, the distance of $C^{-\frac{\alpha}{2}} \mathcal{G}(X,\tau)$ from $C^\frac{\alpha}{2}\nabla_X \Phi_0(X) +  C^\frac{\alpha}{2} C_{\mu_0}^{-1}X$ is given by
\begin{equation}
\begin{aligned}
&\| C^\frac{\alpha}{2} \nabla \Phi_0(X) + C^\frac{\alpha}{2}C_{\mu_0}^{-1} X - C^{-\frac{\alpha}{2}}\mathcal{G}(X,\tau)\|_H^2 \\ & \leq 2\| C^\frac{\alpha}{2} \nabla \Phi_0(X) + C^\frac{\alpha}{2}C_{\mu_0}^{-1} X + C^{\frac{\alpha}{2}-1}S(\tau,X; \mu_0) -C^\frac{\alpha}{2}\nabla \Phi_0(X)\|_H^2 \\& \qquad + 2\| C^{\frac{\alpha}{2}-1}(S_\theta(X,\tau; \mu_0) - S(X,\tau; \mu_0))\|_H^2 \\ & 
\text{(Plug-in Corollary \ref{prop:score})}
\\ 
& \leq 2\tau^2 K^2+2\| C^{\frac{\alpha}{2}-1}(S_\theta(X,\tau; \mu_0) - S(X,\tau; \mu_0))\|_H^2 
\\ & \text{(Assumption \ref{assumption:score})}
\\& \leq 2\tau^2 K^2 + 2\textup{Tr}(C^{{\alpha}-2}) \epsilon_\tau^2.
\end{aligned}
\label{eq:inequality1}
\end{equation}
Notice that
\begin{equation*}
\begin{aligned}
& \mathbb{E}_{\nu_t} [\| C^{-\frac{\alpha}{2}} \mathcal{G} (X_t,\tau) - C^{-\frac{\alpha}{2}} \mathcal{G}(X_{k\gamma},\tau)\|_H^2 \\ & \leq 2 \mathbb{E}_{\nu_t} [\| C^{\frac{\alpha}{2}-1} S_\theta (X_t,\tau; \mu_0) - C^{\frac{\alpha}{2}-1} S_\theta (X_{k\gamma},\tau; \mu_0)\|_H^2] \\ & \quad  + 2 \mathbb{E}_{\nu_t} [\| C^{\frac{\alpha}{2}} \nabla \Phi_0(X_t) - C^{\frac{\alpha}{2}} \nabla \Phi_0(X_{k\gamma})\|_H^2].
\end{aligned}\end{equation*}
By Assumptions \ref{assumption:phi} and \ref{assumption:score}, we have
\begin{equation}
\begin{aligned}
\mathbb{E}_{\nu_t} [\| C^{-\frac{\alpha}{2}}\mathcal{G} (X_t,\tau) - C^{-\frac{\alpha}{2}}\mathcal{G}(X_{k\gamma},\tau)\|_H^2] &\leq 2 \big(
\text{Tr}(C^{\alpha-2}) L^2_\tau  + \text{Tr}(C^{\alpha}) L_{\Phi_0}^2\big) \mathbb{E}_{\nu_t} [\| X_t - X_{k\gamma} \|_H^2]\\
& \leq 2L^2_{\mathcal{G}}\mathbb{E}_{\nu_t} [\| X_t - X_{k\gamma} \|_H^2],
\end{aligned}
\label{eq:inequality-G}
\end{equation}
where
\[
L_{\mathcal{G}}:=\sqrt{ \text{Tr}(C^{\alpha-2}) L^2_\tau  + \text{Tr}(C^{\alpha}) L_{\Phi_0}^2}.
\]
From Lemma \ref{lem:1}, we know for $t \in [k\gamma, (k+1)\gamma]$ that\begin{equation}
\begin{aligned}
\frac{d}{dt}\text{KL}(\nu_t||\mu^{\textbf{y}}) \leq & - \frac{3}{4} \int \left\| C^{\frac{\alpha}{2}}\nabla_{X_t} \log \frac{d\nu_t}{d\mu^{\textbf{y}}}(X_t) \right\|_H^2 d\nu_t (X_t)\\
& + \mathbb{E}_{\nu_t} \left[\|C^{\frac{\alpha}{2}} \nabla \Phi_0(X_t) +   C^{\frac{\alpha}{2}}C_{\mu_0}^{-1}X_t - C^{-\frac{\alpha}{2}} \mathcal{G}(X_{k\gamma},\tau)\|^2_H\right]. 
\end{aligned}
\label{eq:S12}
\end{equation}
The second term can be bounded via Young's inequality, \eqref{eq:inequality1} and \eqref{eq:inequality-G}:
\begin{equation}
\begin{aligned}
& \mathbb{E}_{\nu_t} \left[\|C^{\frac{\alpha}{2}} \nabla \Phi_0(X_t)  +  C^{\frac{\alpha}{2}}C_{\mu_0}^{-1}X_t - C^{-\frac{\alpha}{2}} \mathcal{G}(X_{k\gamma},\tau)\|^2_H\right] \\ 
& \leq 2 \mathbb{E}_{\nu_t} [\| C^{-\frac{\alpha}{2}} \mathcal{G} (X_t,\tau) - C^{-\frac{\alpha}{2}} \mathcal{G}(X_{k\gamma},\tau)\|_H^2] \\ & \quad + 2\mathbb{E}_{\nu_t}[ \| C^{\frac{\alpha}{2}} \nabla \Phi_0(X_t) +C^{\frac{\alpha}{2}}C_{\mu_0}^{-1}X_t - C^{-\frac{\alpha}{2}} \mathcal{G}( X_t,\tau)\|_H^2]\\
& \leq 4 L^2_{\mathcal{G}}\mathbb{E}_{\nu_t} [\| X_t - X_{k\gamma} \|_H^2] +  4(\tau^2 K^2 + \textup{Tr}(C^{\alpha-2}) \epsilon_\tau^2).
\end{aligned}
\label{eq:S13}
\end{equation}
We can bound the first term of the inequality above via
\[
\begin{aligned}
& \mathbb{E}_{\nu_t}[\|X_t-X_{k\gamma}\|_H^2] \\
& \leq 2 (t-k\gamma)^2 \text{Tr}(C^\alpha) \mathbb{E}_{\nu_t}\left[\left\| -C^{\frac{\alpha}{2}-1}  S_\theta (\tau,X_{k\gamma}; \mu_0)  - C^{\frac{\alpha}{2}} \nabla_{X_{k\gamma}} \log (\rho(\textbf{y}-\mathcal{A}(X_{k\gamma}))) \right\|_H^2\right]\\
& \quad + 4 \mathbb{E}_{\nu_t}[\|C^{\frac{\alpha}{2}} (W_t-W_{k\gamma})\|_H^2] \\
& \leq 2(t-k\gamma)^2 \text{Tr}(C^\alpha) \left( 2 \mathbb{E}_{\nu_t} \left[\left\|-C^{\frac{\alpha}{2}-1}S_\theta (\tau,X_{t}; \mu_0) - C^{\frac{\alpha}{2}} \nabla_{X_{t}} \log (\rho(\textbf{y}-\mathcal{A}(X_{t}))) \right\|_H^2\right] \right. \\&
\quad + \left. 4 L^2_{\mathcal{G}}\mathbb{E}_{\nu_t}[\| X_{k\gamma}-X_t \|_H^2] \right) + 4 \text{Tr}(C^\alpha)(t-k\gamma),
\end{aligned} \]
where for the last step we used Young's inequality and \eqref{eq:inequality-G}.
Rearranging the terms yields
\[
\begin{aligned}
& (1-8(t-k\gamma)^2 \text{Tr}(C^\alpha) L^2_{\mathcal{G}}) \mathbb{E}_{\nu_t}[\| X_{k\gamma}-X_t\|_H^2]\\
& \leq 4 (t-k\gamma)^2 \text{Tr}(C^\alpha)  \mathbb{E}_{\nu_t} \left[\left\|  -C^{\frac{\alpha}{2}-1}S_\theta (\tau,X_t; \mu_0) - C^{\frac{\alpha}{2}} \nabla_{X_t} \log (\rho(\textbf{y}-\mathcal{A}(X_t))) \right\|_H^2\right] \\ &\quad  + 4 \text{Tr}(C^\alpha )(t-k\gamma),
\end{aligned}
\]
which can be simplified by letting $\gamma \leq \frac{1}{4 \sqrt{\text{Tr}(C^\alpha)}L_{\mathcal{G}}} \Rightarrow 1-8(t-k\gamma)^2\text{Tr}(C^\alpha)L^2_{\mathcal{G}}  \geq 1-8\gamma^2 \text{Tr}(C^\alpha)L^2_{\mathcal{G}}  \geq \frac{1}{2}$. 
Therefore, when $\gamma \leq \frac{1}{4 \sqrt{\text{Tr}(C^\alpha)}L_{\mathcal{G}}}$, it holds that
\begin{equation}
\begin{aligned}
& \mathbb{E}_{\nu_t}[\| X_{k\gamma}-X_t\|_H^2]\\
& \leq 8 (t-k\gamma)^2 \text{Tr}(C^\alpha) \mathbb{E}_{\nu_t} \left[\left\|-C^{\frac{\alpha}{2}-1}S_\theta (\tau,X_t; \mu_0) - C^{\frac{\alpha}{2}} \nabla_{X_t} \log (\rho(\textbf{y}-\mathcal{A}(X_t))) \right\|_H^2\right] \\ & \quad + 8 \text{Tr}(C^\alpha)(t-k\gamma).
\end{aligned}
\label{eq:S14}
\end{equation}
By plugging \eqref{eq:S14} and \eqref{eq:S13} into \eqref{eq:S12} and invoking Lemma \ref{lem:2}, we can obtain
\begin{equation}
\begin{aligned}
&\frac{d}{dt}\text{KL}(\nu_t || \mu^{\textbf{y}}) \\
& \text{(Plug-in Eq. \eqref{eq:S12} and Eq. \eqref{eq:S13})}\\
& \leq -\frac{3}{4} \int \left\| C^{\frac{\alpha}{2}}\nabla_{X_t} \log \frac{d\nu_t}{d\mu^{\textbf{y}}}(X_t) \right\|_H^2 \!\!\! d\nu_t (X_t)  \! +  \! 4 L^2_{\mathcal{G}}\mathbb{E}_{\nu_t} [\| X_t - X_{k\gamma} \|_H^2] \! + \!  4(\tau^2 K^2 \! + \!\textup{Tr}(C^{\alpha-2}) \epsilon_\tau^2) \\
&\text{(Plug-in Eq. \eqref{eq:S14})}\\
& \leq -\frac{3}{4} \int \left\| C^{\frac{\alpha}{2}}\nabla_{X_t} \log \frac{d\nu_t}{d\mu^{\textbf{y}}}(X_t) \right\|_H^2 d\nu_t (X_t)
\\& \quad + 32 (t-k\gamma)^2 \textup{Tr}(C^\alpha) L^2_{\mathcal{G}}  \mathbb{E}_{\nu_t} \left[\left\|  -C^{\frac{\alpha}{2}-1}S_\theta (\tau,X_{t}; \mu_0) - C^{\frac{\alpha}{2}} \nabla_{X_{t}} \log (\rho(\textbf{y}-\mathcal{A}(X_{t}))) \right\|_H^2\right] \\& \quad + 32 \text{Tr}(C^\alpha) (t-k\gamma) L^2_{\mathcal{G}} + 4(\tau^2 K^2 + \textup{Tr}(C^{\alpha-2}) \epsilon_\tau^2)\\
&\text{(Plug-in Lemma \ref{lem:2} with }\|C^{\frac{\alpha}{2}} (\nabla \Phi_0(X) +  C_{\mu_0}^{-1} X) - C^{-\frac{\alpha}{2}} \mathcal{G}(X)\|_H^2 \leq 2(\tau^2 K^2 + \textup{Tr}(C^{\alpha-2}) \epsilon_\tau^2)))\\
& \leq -\frac{3}{4}\int \left\| C^{\frac{\alpha}{2}}\nabla_{X_t} \log \frac{d\nu_t}{d\mu^{\textbf{y}}}(X_t) \right\|_H^2 d\nu_t (X_t) \\
& \quad + 64 (t-k\gamma)^2 \textup{Tr}(C^\alpha) L^2_{\mathcal{G}} \! \left( \int \left\| C^{\frac{\alpha}{2}}\nabla_{X_t} \log \frac{d\nu_t}{d\mu^{\textbf{y}}}(X_t) \right\|_H^2 \! \! d\nu_t (X_t) \! + \! 2  \textup{Tr}(C^\alpha) L_{\Phi_0} \right. \\ & \left. \quad  + 2(\tau^2 K^2 +\textup{Tr}(C^{\alpha-2}) \epsilon_\tau^2) \! \right) + 32 \text{Tr}(C^\alpha) (t-k\gamma) L^2_{\mathcal{G}} + 4 (\tau^2 K^2 + \textup{Tr}(C^{\alpha-2}) \epsilon_\tau^2).
\end{aligned}
\label{eq:S15}
\end{equation}
Let $L=\max\{L_{\mathcal{G}}, L_{\Phi_0} \}$. We  can simplify \eqref{eq:S15} by letting $\gamma \leq \frac{1}{\sqrt{128 \text{Tr}(C^\alpha)}L}\Rightarrow 64(t-k\gamma)^2 \text{Tr}(C^\alpha) L^2 \leq 64\gamma^2 \text{Tr}(C^\alpha) L^2 \leq 
\frac{1}{2}$. Therefore, once $\gamma \leq \frac{1}{\sqrt{128 \text{Tr}(C^\alpha)}L}$, we get
\begin{equation}
\begin{aligned}
& \frac{d}{dt}\text{KL}(\nu_t||\mu^{\textbf{y}})\\
& \leq -\frac{1}{4} \int \left\| C^{\frac{\alpha}{2}}\nabla_{X_t} \log \frac{d\nu_t}{d\mu^{\textbf{y}}}(X_t) \right\|_H^2 d\nu_t (X_t)
\\& \quad + 64(t-k\gamma)^2 \text{Tr}(C^\alpha) L^2 \big(2\text{Tr}(C^\alpha) L + 2(\tau^2 K^2 + \textup{Tr}(C^{\alpha-2}) \epsilon_\tau^2)\big) \\& \quad + 32 (t-k\gamma) \text{Tr}(C^\alpha) L^2 + 4(\tau^2 K^2 + \textup{Tr}(C^{\alpha-2}) \epsilon_\tau^2).
\end{aligned}
\label{eq:S16}
\end{equation}
By integrating \eqref{eq:S16} between $[k\gamma, (k+1)\gamma]$ we get
\[
\begin{aligned}
&\text{KL}(\nu_{(k+1)\gamma}||\mu^{\textbf{y}}) - \text{KL}(\nu_{k\gamma}||\mu^{\textbf{y}})\\
& \leq -\frac{1}{4} \int_{k\gamma}^{(k+1)\gamma} \left( \int \left\| C^{\frac{\alpha}{2}}\nabla_{X_t} \log \frac{d\nu_t}{d\mu^{\textbf{y}}}(X_t) \right\|_H^2 d\nu_t (X_t) \right) dt \\
& \quad + \frac{64}{3} \text{Tr}(C^\alpha) L^2 \gamma^3 \big(2\text{Tr}(C^\alpha) L + 2(\tau^2 K^2 + \textup{Tr}(C^{\alpha-2}) \epsilon_\tau^2)\big) \! + \!16\gamma^2 \text{Tr}(C^\alpha) L^2 \\ & \quad + 4\gamma (\tau^2 K^2 + \textup{Tr}(C^{\alpha-2}) \epsilon_\tau^2)\\
&= -\frac{1}{4} \int_{k\gamma}^{(k+1)\gamma} \left( \int \left\| C^{\frac{\alpha}{2}}\nabla_{X_t} \log \frac{d\nu_t}{d\mu^{\textbf{y}}}(X_t) \right\|_H^2 d\nu_t (X_t) \right) dt \\
& \quad + \left( \frac{128}{3} \textup{Tr}(C^\alpha)L\gamma +16\right) \text{Tr}(C^\alpha) L^2 \gamma^2 + \left(\frac{128}{3}L^2 \gamma^2 + 4\right)\gamma (\tau^2 K^2 + \textup{Tr}(C^{\alpha-2}) \epsilon_\tau^2) \\
& \leq -\frac{1}{4} \int_{k\gamma}^{(k+1)\gamma} \left( \int \left\| C^{\frac{\alpha}{2}}\nabla_{X_t} \log \frac{d\nu_t}{d\mu^{\textbf{y}}}(X_t) \right\|_H^2 d\nu_t (X_t) \right) dt \\
& \quad + \big(4 \sqrt{\text{Tr}(C^\alpha)}+16\big) \text{Tr}(C^\alpha) L^2 \gamma^2 + \left(\frac{1}{\text{Tr}(C^\alpha)}+4\right)\gamma (\tau^2 K^2 + \textup{Tr}(C^{\alpha-2}) \epsilon_\tau^2), 
\end{aligned}
\]
where in the last inequality we invoked $\gamma \leq \frac{1}{\sqrt{128 \text{Tr}(C^\alpha)}L}$. 
Now by averaging over $N>0$ iterations and dropping the negative term, we can derive the result of Theorem \ref{thm:1}:
\[
\begin{aligned}
 & \frac{1}{N\gamma} \int_{0}^{N\gamma} \left( \int \left\| C^{\frac{\alpha}{2}}\nabla_{X_t} \log \frac{d\nu_t}{d\mu^{\textbf{y}}}(X_t) \right\|_H^2 d\nu_t (X_t) \right) dt  \\ & \leq \frac{4 \text{KL}(\nu_0||\mu^{\textbf{y}})}{N\gamma } + \big( \big(16 \sqrt{\text{Tr}(C^\alpha)}+64\big) \text{Tr}(C^\alpha) L^2 \big) \gamma + \underbrace{\left(\frac{4}{\text{Tr}(C^\alpha)}+16\right) K^2}_{A_1} \tau^2 \\ & \quad +\underbrace{\left(\frac{4}{\text{Tr}(C^\alpha)}+16\right) \text{Tr}(C^{\alpha-2})}_{A_2} \epsilon^2_\tau.
\end{aligned}
\]
\end{proof} 


\end{document}